\newcommand{\rd}{\mathrm{d}}
\newcommand{\norm}[1]{ \| #1 \|}
\newcommand{\uvec}{\mathsf{u}}
\newcommand{\yvec}{\mathsf{y}}
\newcommand{\Amat}{\mathsf{A}}
\newcommand{\Imat}{\mathsf{I}}
\newcommand{\Vmat}{\mathsf{V}}
\newcommand{\calG}{\mathcal{G}}
\newcommand{\calM}{\mathcal{M}}
\newcommand{\sfR}{\mathsf{R}}
\newtheorem{remark}{Remark}
\DeclareMathOperator*{\argmin}{arg\,min}
\newcommand{\Col}{\text{Col}}
\newcommand{\calP}{\mathcal{P}}
\newcommand{\calR}{\mathcal{R}}
\newcommand{\calD}{\mathcal{D}}
\newcommand{\wass}{\mathcal{W}}
\newcommand{\opt}{\textrm{opt}}
\newcommand{\eq}{\infty}
\newcommand{\KL}{\text{KL}}
\newcommand{\ac}{\text{ac}}
\newcommand{\Row}{\text{Row}}
\newcommand{\bbR}{\mathbb{R}}
\newcommand*{\rom}[1]{\expandafter\@slowromancap\romannumeral #1@}
\newcommand\numberthis{\addtocounter{equation}{1}\tag{\theequation}}
\title{Stochastic Inverse Problem: stability, regularization and Wasserstein gradient flow\thanks{Submitted to the editors on \today.
\funding{Q.~Li is partially supported by DMS-2308440 and DMS-2023239. L.~Wang is partially supported by NSF grant DMS-1846854 and UMN DSI-SSG-4886888864. M.~Oprea and Y.~Yang acknowledges support from NSF through grant DMS-2409855, Office of Naval Research  through grant
N00014-24-1-2088, and Cornell PCCW Affinito-Stewart Grant.}}}
\author{Qin Li\thanks{Department of Mathematics, University of Wisconsin-Madison, Madison, WI (\email{qinli@math.wisc.edu}).}
\and Maria Oprea\thanks{Center for Applied Mathematics,  Cornell University, Ithaca, NY (\email{mao237@cornell.edu}).}
\and Li Wang\thanks{School of Mathematics, University of Minnesota Twin Cities, Minneapolis, MN (\email{liwang@umn.edu}).}
\and Yunan Yang\thanks{Department of Mathematics,  Cornell University, Ithaca, NY (\email{yunan.yang@cornell.edu}).}}
\date{\today}
\begin{document}
\maketitle

\begin{abstract}
Inverse problems in physical or biological sciences often involve recovering an unknown parameter that is random. The sought-after quantity is a probability distribution of the unknown parameter, that produces data that aligns with measurements. Consequently, these problems are naturally framed as stochastic inverse problems. In this paper, we explore three aspects of this problem: direct inversion, variational formulation with regularization, and optimization via gradient flows, drawing parallels with deterministic inverse problems. A key difference from the deterministic case is the space in which we operate. Here, we work within probability space rather than Euclidean or Sobolev spaces, making tools from measure transport theory necessary for the study. Our findings reveal that the choice of metric --- both in the design of the loss function and in the optimization process --- significantly impacts the stability and properties of the optimizer.
\end{abstract}

\section{Introduction}
Inverse problems focus on inferring parameters from data. Given the forward map $\mathcal{G}$ and the collected data $y$, which approximates the true data $y^\ast$, one seeks a parameter $u$ such that
\begin{equation}\label{eqn:inversion_det}
    \mathcal{G}(u) = y \quad\Longrightarrow\quad u = \mathcal{G}^{-1}(y)\,.
\end{equation}
When $\mathcal{G}$ is not invertible, $\mathcal{G}^{-1}$ should be interpreted as a pre-image. Practical problems introduce additional complexities. First, $\mathcal{G}^{-1}$ may not be uniquely defined, and the data $y = y^\ast + \delta$ may include measurement error $\delta$. To address these issues, one typically adopts a variational framework, seeking a solution to the following optimization problem:
\begin{equation}\label{eqn:optimization_det}
    \min_u L(u) = \|\mathcal{G}(u) - y\| + \mathsf{R}(u)\,.
\end{equation}
Here the norm in the first term and the choice of the regularization term $\mathsf{R}$ depends on prior knowledge about the properties of $u$ and $\mathcal{G}$~\cite{engl1996regularization}. Classical examples include using the total variation (TV) norm~\cite{rudin1992nonlinear} or $L^1$ norm~\cite{candes2006stable} for $\mathsf{R}$ to promote sparsity, and the $L^2$ norm (i.e., mean squared error) for the data fidelity term to account for measurement error.

The formulation in~\eqref{eqn:optimization_det} motivates the development of various solvers, with one of the most prominent being the gradient descent method~\cite{nocedal1999numerical,boyd2004convex}. The continuous-time limit of this method is given by:
\begin{equation}\label{eqn:gradient_det}
    \dot{u} = \frac{\rd}{\rd t} u = -\nabla_u L\,.
\end{equation}
The objective is that, in pseudo-time $t$, the parameter $u(t)$ evolves towards the point that minimizes~\eqref{eqn:optimization_det} with a proper initial guess $u(0)$.

The combination of~\eqref{eqn:inversion_det},~\eqref{eqn:optimization_det}, and~\eqref{eqn:gradient_det} raises several important questions, both qualitatively and quantitatively. Qualitatively, one may ask whether~\eqref{eqn:optimization_det} has a unique solution and whether it adequately approximates~\eqref{eqn:inversion_det}. Additionally, does the process described by~\eqref{eqn:gradient_det} converge? Quantitatively, how closely does the solution to~\eqref{eqn:optimization_det} approximate the solution to~\eqref{eqn:inversion_det}, and how fast does the gradient descent method in~\eqref{eqn:gradient_det} converge?

Many of these questions have been answered beautifully in specific contexts, driving significant research that underpins the foundations of Tikhonov regularization~\cite{golub1999tikhonov,engl1996regularization}, total variation denoising~\cite{rudin1992nonlinear}, and compressive sensing~\cite{candes2006stable}. Our aim is to lift all of these discussions on inverse problems, from the Euclidean space, to the space of probability distributions.

Lifting these problems up to the probability space is not only a mathematically interesting question, but also is backed by substantial practical demand. Over recent years, inverse problems associated with finding probability measures have gained increasing prominence. For example, in weather prediction, the goal is to infer the distribution of pressure and temperature changes~\cite{gneiting2014probabilistic}; in plasma simulation, one aims to infer the distribution of plasma particles using macroscopic measurements~\cite{Ferron_1998,caflisch2021adjoint}; in experimental design, the objective is to determine the optimal distribution of tracers or detectors to achieve the best measurements~\cite{huan2024optimalexperimentaldesignformulations,jin2024optimaldesignlinearmodels,yu2018scalable}; and in optical communication, the task is to recover the distribution of the optical environment~\cite{bracchini2004spatial,Korotkova:11,Borcea2015}. Other problems include those arising in aerodynamics~\cite{del2022stochastic}, biology~\cite{davidian2003nonlinear,daun2008ensemble,tang2023ensemble}, and cryo-EM~\cite{giraldo2021bayesian,tang2023ensemble}. In all these problems, the sought-after quantity is a probability distribution, density, or measure that matches the given data. Consequently, inverse problems in this stochastic setting are naturally formulated as the inversion for a probability distribution, giving rise to the so-called stochastic inverse problem~\cite{breidt2011measure,butler2012computational,butler2013numerical,butler2014measure,marcy2022stochastic,li2023differential,white2024building}.

We are now tasked with translating the~\eqref{eqn:inversion_det}-\eqref{eqn:optimization_det}-\eqref{eqn:gradient_det} framework into the stochastic setting. The same three problems will be investigated in this new context. Throughout this paper, we assume that the push-forward map $\mathcal{G}$ is known~\cite{Villani}, meaning that for any given $u$, we can efficiently evaluate $\mathcal{G}(u)$. Although it may be computationally expensive, we also assume that $\nabla_u\mathcal{G}$ can be evaluated. Additionally, we assume that the measured data distribution $\rho^\delta_y$ is within a $\delta$-distance (the specific definition of this distance will be clarified later in the appropriate context) from the ground truth data distribution $\rho_y^\ast = \mathcal{G}\#\rho_u^\ast$, meaning $\rho_y^\ast$ is obtained by push-forwarding $\rho_u^\ast$ through $\calG$, where $\rho_u^* $ is the true parameter distribution. Our objective is to
\vspace{0.1in}
\begin{center}
    \emph{design a formulation and a solver to find $\rho_u$  that approximates $\rho_u^\ast$ from data $\rho^\delta_y$.}
\end{center}
\vspace{0.1in}

Similar to the deterministic case, we consider the following three problems:
\begin{itemize}
    \item \textbf{Problem I: Direct Inversion.} This involves solving
    \begin{equation}\label{eqn:inversion_sto}
    \rho^\delta_u = \mathcal{G}^{-1}\#\rho^\delta_y\,.
    \end{equation}
    We need to understand the meaning of $\mathcal{G}^{-1}$ when $\mathcal{G}$ is not invertible. Additionally, we will assess the error between $\rho^\delta_u$, the reconstructed distribution, and $\rho_u^\ast$, the ground truth, when $\rho^\delta_y$ is within a $\delta$-ball of $\rho^\ast_y$ for a given distance/divergence. This problem mirrors~\eqref{eqn:inversion_det}.
    
    \item \textbf{Problem II: Variational Formulation.} The objective here is to define an appropriate functional $E[\rho_u;\rho^\delta_y]$ and solve the optimization problem
    \begin{equation}\label{eqn:optimization_sto}
    \rho^\delta_u = \argmin_{\rho_u \in \mathcal{P}} E[\rho_u; \rho^\delta_y]\,,
    \end{equation}
    where $\mathcal{P}$ represents the space of probability distributions.
    This variational approach reformulates Problem I. The goal remains to approximate $\rho_u^\ast$ by $\rho_u^\delta$, given that $\rho_y^\delta$ is a $\delta$-perturbation of $\rho_y^\ast$. A well-defined $E$, combined with a structured regularization term, can further ensure that $\rho^\delta_u$ closely approximates $\rho_u^\ast$. This is analogous to~\eqref{eqn:optimization_det}.
    
    \item \textbf{Problem III: Gradient Flow Structure.} Here, the focus is on analyzing the gradient-based solver
    \begin{equation}\label{eqn:gradient_sto}
    \partial_t \rho_u = -\nabla E\,,
    \end{equation}
    and its performance on the space $\mathcal{P}$, the collection of all probabilities. It is important to note that the gradient of the energy functional, $\nabla E$, is metric-dependent. Different choices of metrics and properties of $E$ can significantly impact convergence. This problem corresponds to~\eqref{eqn:gradient_det}.
\end{itemize}
In summary, our aim is to extend key formulations from the deterministic inverse problem, \eqref{eqn:inversion_det}-\eqref{eqn:optimization_det}-\eqref{eqn:gradient_det}, to their counterparts in the space of probability measures, \eqref{eqn:inversion_sto}-\eqref{eqn:optimization_sto}-\eqref{eqn:gradient_sto}, as illustrated in~\Cref{fig:det_sto_diagram}.

\begin{figure}[h!]
\centering
\begin{tikzpicture}[scale=1.0]
    \path[<->, thick] (-1, -0.1) edge [bend right] node[right, xshift=5mm, yshift=0mm] {$u \sim \rho_u$}(-0.7, -2);
    
    \path[<->,thick] (4.2, -0.1) edge [bend left] node[left, xshift=-3mm] {$y \sim \rho_y$} (4.5, -2);

    \draw[smooth cycle, pattern color=orange, pattern=crosshatch dots] 
        plot coordinates {(-1.5,0) (-2, 1.2) (-0.5,2) (1, 0.4)} 
        node [label={[label distance=0.3cm, xshift=-1.4cm, yshift = -0.5cm, fill=white]:$\rho_{u} \in \mathcal{P}(\mathcal{D})$}] {};

    \draw[smooth cycle, pattern color=blue, pattern=crosshatch dots] 
        plot coordinates {(4,0) (2.5, 2.2) (4.5,1.7) (6.6, 1)} 
        node [label={[label distance=-0.8cm, xshift=-1.7cm, yshift=0.3cm, fill=white]:$\rho_y\in \mathcal{P}(\mathcal{R})$}] {};
    \draw[smooth cycle, pattern color=orange, pattern=north east lines] plot coordinates {(-1.5, -3.5) (-1.5, -2.2) (0.2, -2.2) (0.2, -3.5)} 
        node [label={[label distance=0cm, xshift=-0.8cm, yshift = 0.2cm, fill=white]:$u \in \mathcal{D}$}] {};
    \draw[smooth cycle, pattern color=blue, pattern=north east lines] 
        plot coordinates {(3.5, -3.5) (3.5, -2.2) (5.2, -2.2) (5.2, -3.5)} 
        node [label={[label distance=-0cm, xshift=-0.8cm, yshift=0.2cm, fill=white]:$y\in \mathcal{R}$}] {};
        
    \draw[thick, ->] (0.8,1) -- (2.8, 1) node [label={[xshift=-1cm, above]:$\rho_y = \mathcal{G}_\sharp \rho_u$}] {};

    \draw[thick, ->] (0.8,-3) -- (2.8, -3) node [label={[xshift=-1cm, above]:$y = \mathcal{G}(u)$}] {};
\end{tikzpicture}
\caption{A diagram showing the relations between deterministic inverse problem~\eqref{eqn:inversion_det} and the stochastic inverse problem~\eqref{eqn:inversion_sto} formulated based on the push-forward map.}\label{fig:det_sto_diagram}
\end{figure}
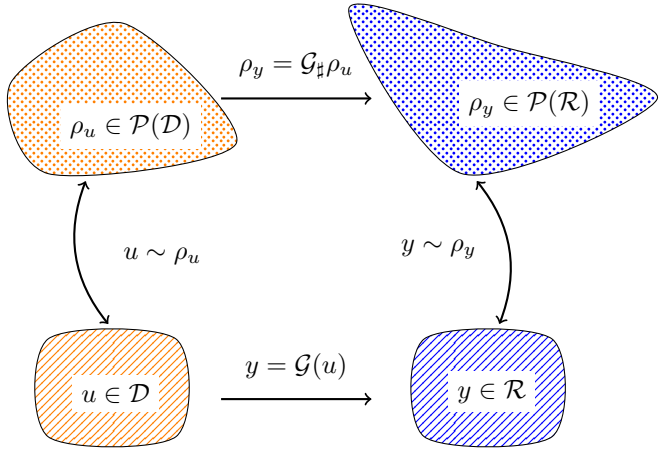

It is impossible to address all the above questions in their most general settings in one paper. Here, we will tackle some fundamental ones and establish connections with their deterministic counterparts. The key findings of our study are:

\begin{itemize}
    \item[1.] The stability of direct inversion is highly dependent on the metric used to measure the reconstruction, both in the invertible case (\Cref{thm:inv_stability}) and the under-determined case (\Cref{thm:under-stability}). Notably, the Wasserstein distance (e.g., $\wass_2$) is more sensitive to data perturbations than $f$-divergences.
    
    \item[2.] In the variational formulation, the choice of the regularizer and its relationship with the main objective function play a crucial role in the optimizer's behavior. We explore both entropy-entropy and $\wass_2$-$\wass_2$ pairings, observing a \textit{strong similarity to the classical Tikhonov regularization}. The optimal value of the regularization coefficient depends on the size of $\delta$, and these details are outlined in \Cref{reg-KL} and \Cref{reg-W}.
    
    \item[3.] In the gradient flow formulation, we find that the form of the objective function leads to distinct equilibrium solutions. Interestingly, as demonstrated in \Cref{thm:equilibrium_form}, the recovery corresponds to a \textit{conditional distribution} in the case of $f$-divergence and a \textit{marginal distribution} in the case of $\wass_2$, under some assumptions.
\end{itemize}

In the subsequent sections, \Cref{sec:problemI}, \Cref{sec:problemII}, and \Cref{sec:problemIII}, we examine Problems I, II, and III as posed above, respectively. Throughout the paper, we denote the map
\begin{equation}\label{eqn:G_def}
\mathcal{G}: \mathcal{D} \subset \mathbb{R}^m \to \mathcal{R} \subset \mathbb{R}^n
\end{equation}
taking the domain $\mathcal{D}$ to the range $\mathcal{R}$. For the sake of precise statements, we occasionally consider $\mathcal{G}$ as a linear map, with $\mathcal{G} = \Amat \in \mathbb{R}^{n \times m}$ representing a matrix. The matrix $\Amat$ may vary in size depending on whether the problem is overdetermined or underdetermined, but it is always assumed to be full-rank, meaning that the number of non-zero singular values equals $\min\{m, n\}$. We denote the smallest singular value as $\sigma_{\min}(\Amat)$. Additionally, we use $\Amat^\dagger$ to denote the Moore--Penrose inverse of $\Amat$, given by
\begin{equation}\label{eqn:A_penrose}
\Amat^\dagger = \begin{cases}
(\Amat^\top \Amat)^{-1} \Amat^\top\,,\quad &\text{when } n > m \text{ and the system is overdetermined,} \\
\Amat^\top(\Amat \Amat^\top)^{-1}\,,\quad &\text{when } n < m \text{ and the system is underdetermined.}
\end{cases}
\end{equation}

Moreover, $\mathcal{P}(\Omega)$ denotes the collection of probability measures whose support lies within $\Omega$. When the subscript ``ac" is used, we focus exclusively on probability measures that are absolutely continuous with respect to the Lebesgue measure, meaning they have 
probability density functions. When the subscript $n$ appears, we consider the subset of $\mathcal{P}$ whose $n$-th order moment is finite. For example, $\mathcal{P}_2$ includes all probability measures with bounded second-order moments. 

Two classes of discrepancy measurement will be employed: the Wasserstein metric and the $f$-divergence. Specifically, the $p$-Wasserstein distance between two probability measures is defined as:
\begin{equation}\label{eq:wass_p}
\wass_p(\mu, \nu) = \left(\min_{\gamma \in \Gamma(\mu, \nu)} \int \|x - y\|^p \,\mathrm{d}\gamma\right)^{1/p}\,, \quad p \geq 1\,,
\end{equation}
where $\Gamma$ represents the set of all couplings between the two measures. By definition, $\wass_p$ is only applicable in the space $\mathcal{P}_p$. The general $f$-divergence is defined as:
\begin{equation}\label{eq:f-divergence}
D_f(\mu \,\|\, \nu) = \int f\left(\frac{\mathrm{d} \mu}{\mathrm{d} \nu}\right) \,\mathrm{d} \nu\,,
\end{equation}
for a convex function $f$. According to this definition, $\mu$ must be absolutely continuous with respect to $\nu$, i.e., $\mu \ll \nu$, for the $f$-divergence to be well-defined. One classical example in this category is the KL divergence where
\[
f(x) = x\ln(x)\,,\quad \KL(\mu \,\|\, \nu) = \int\ln\frac{\rd\mu}{\rd\nu}\rd\mu\,.
\]

\section{Problem I: direct inversion, wellposedness and stability}\label{sec:problemI}
This section is dedicated to Problem I, direct inversion. More specifically, we study~\eqref{eqn:inversion_sto}, and the problems associated with its formulation: the definition and stability. To frame the problem in the context, we first review our knowledge in the deterministic setting, before lifting it up to our setting.

\subsection{Direct inversion in the deterministic setting}
We now examine~\eqref{eqn:inversion_det} in the standard Euclidean space equipped with the $L^2$ norm. It is classical knowledge that if $\mathcal{G}$ is invertible, and $y\in\mathcal{R}$, then
\[
u=\mathcal{G}^{-1}(y)
\]
is a well-defined quantity. Moreover, denote the control of the measurement error $\|y-y^\ast\|\leq \delta$. If $\mathcal{G}^{-1}$ is $\beta$-H\"older continuous, for some $\beta \in (0,1]$, that is
\begin{equation}\label{eqn:holder_cont}
\|\mathcal{G}^{-1}(y_1)-\mathcal{G}^{-1}(y_2)\|\leq C \norm{y_1 - y_2}^\beta, \ \forall y_1, y_2 \in \mathcal{D}
\end{equation}
for some $C$, we quickly have the stability
\begin{equation}\label{eqn:stability_det_nonlinear}
\|u-u^\ast\|\leq C\delta^\beta\,.
\end{equation}

The problem becomes interesting when $\mathcal{G}$ is not invertible. In this case, $\mathcal{G}^{-1}$ should be understood as the pre-image, and the solution is thus not unique. The stability highly depends on the specifics of $\mathcal{G}$, and if $\mathcal{G}$ is linear, the problem can be analyzed in a more generic form. 

Let $\Amat \in \mathbb{R}^{n \times m}$ be an underdetermined matrix of full rank, i.e., $m > n$. We would like to invert the operation $\Amat\uvec = \yvec$. The solution is non-unique, so we can only analyze stability in terms of the distance between the solution sets. To this end, we view $\Amat^{-1}$ as the pre-image operator. For every $\yvec\in\mathbb{R}^{n}$, define $S_\yvec :=  \{\uvec | \Amat\uvec = \yvec\}$. Clearly for linear systems,
\begin{equation}\label{eqn:split_set}
S_\yvec = \{ \Amat^\dagger \yvec  +  \uvec_0 | \Amat\uvec_0 = 0\}=\underbrace{\Amat^\dagger\yvec}_{\in\text{Row}(\Amat)}+\,\,\mathcal{N}(\Amat)\,,
\end{equation}
where $\Amat^\dagger$ is defined in~\eqref{eqn:A_penrose} and $\mathcal{N}(\Amat)$ denotes the null space of $\Amat$. Note that $\mathcal{N}(\Amat)^\perp = \text{Row}(\Amat)$, and the decomposition above is composed of $S_\yvec$'s projection on two subspaces and the orthogonal decomposition of each element is unique (see Figure \ref{fig:decomp}).
Since $\Amat^\dagger\yvec$ is the projection of the set $S_\yvec$ onto $\mathcal{N}(\Amat)^\perp$, it can also be interpreted as:
\[
\Amat^\dagger \yvec = \argmin_{\uvec} \|\uvec\|^2_2 \text{\quad  subject to \quad } \Amat\uvec = \yvec\,.
\]
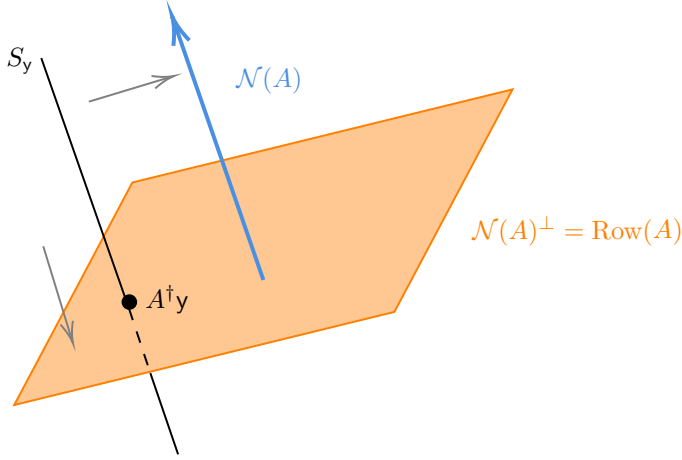
\begin{figure}
    \centering
    \tikzset{every picture/.style={line width=0.75pt}} 

\begin{tikzpicture}[x=0.75pt,y=0.75pt,yscale=-1,xscale=1]

\draw  [color=orange, draw opacity=1 ][fill=orange ,fill opacity=0.43 ] (169.93,85.89) -- (361.65,38.9) -- (302.07,151.21) -- (110.35,198.2) -- cycle ;
\draw [color={rgb, 255:red, 74; green, 144; blue, 226 }  ,draw opacity=1 ][line width=1.5]    (236,135.1) -- (190.98,4.94) ;
\draw [shift={(190,2.1)}, rotate = 70.92] [color={rgb, 255:red, 74; green, 144; blue, 226 }  ,draw opacity=1 ][line width=1.5]    (14.21,-4.28) .. controls (9.04,-1.82) and (4.3,-0.39) .. (0,0) .. controls (4.3,0.39) and (9.04,1.82) .. (14.21,4.28)   ;
\draw  [fill={rgb, 255:red, 0; green, 0; blue, 0 }  ,fill opacity=1 ] (171.83,146.39) .. controls (171.75,148.27) and (170.16,149.74) .. (168.27,149.66) .. controls (166.39,149.58) and (164.92,147.99) .. (165,146.1) .. controls (165.08,144.21) and (166.67,142.75) .. (168.56,142.83) .. controls (170.44,142.91) and (171.91,144.5) .. (171.83,146.39) -- cycle ;
\draw [color={rgb, 255:red, 128; green, 128; blue, 128 }  ,draw opacity=1 ]   (125,118.1) -- (139.41,165.19) ;
\draw [shift={(140,167.1)}, rotate = 252.98] [color={rgb, 255:red, 128; green, 128; blue, 128 }  ,draw opacity=1 ][line width=0.75]    (10.93,-3.29) .. controls (6.95,-1.4) and (3.31,-0.3) .. (0,0) .. controls (3.31,0.3) and (6.95,1.4) .. (10.93,3.29)   ;
\draw [color={rgb, 255:red, 128; green, 128; blue, 128 }  ,draw opacity=1 ]   (148,45.1) -- (189.09,32.68) ;
\draw [shift={(191,32.1)}, rotate = 163.18] [color={rgb, 255:red, 128; green, 128; blue, 128 }  ,draw opacity=1 ][line width=0.75]    (10.93,-3.29) .. controls (6.95,-1.4) and (3.31,-0.3) .. (0,0) .. controls (3.31,0.3) and (6.95,1.4) .. (10.93,3.29)   ;
\draw    (124,23.1) -- (168.27,149.66) ;
\draw    (179,182.1) -- (193,223.1) ;
\draw  [dash pattern={on 4.5pt off 4.5pt}]  (168.27,149.66) -- (179,182.1) ;

\draw (339.99,101.42) node [anchor=north west][inner sep=0.75pt]  [rotate=-359.95]  {$\textcolor{orange}{\mathcal{N}( A)^{\perp }= \text{Row}(A)} $};
\draw (221,26.4) node [anchor=north west][inner sep=0.75pt]    {$\textcolor[rgb]{0.29,0.56,0.89}{\mathcal{N}( A)}$};
\draw (175,136.4) node [anchor=north west][inner sep=0.75pt]    {$A^{\dagger } \yvec$};
\draw (105,16.4) node [anchor=north west][inner sep=0.75pt]    {$S_\yvec$};

\end{tikzpicture}
    \caption{Orthogonal decomposition of the domain of $\Amat$.}
    \label{fig:decomp}
\end{figure}

We now define the distance between two solution sets as:
\begin{equation}\label{eqn:def_distance_set_det}
d (S_\yvec, S_{\yvec'}) = \inf_{\uvec \in S_\yvec, \uvec' \in S_{\yvec'}} \|\uvec - \uvec'\|\,,
\end{equation}
where we adopt the standard Euclidean distance. The decomposition~\eqref{eqn:split_set} allows us to easily compute this distance:
\begin{equation}\label{eqn:stability_det_linear}
    \begin{aligned}
        d (S_\yvec, S_{\yvec'}) &= \inf_{\uvec \in S_\yvec, \uvec' \in S_{\yvec'}} \|\uvec - \uvec'\| = \|\Amat^\dagger \yvec  - \Amat^\dagger\yvec'\|+\inf_{\uvec_{1,2} \in \mathcal{N}(\Amat)}\|  \uvec_1-\uvec_2\|\\
        &\leq \|\Amat^\dagger\|\|\yvec-\yvec'\|\leq \frac{\|\yvec-\yvec'\|}{\sigma_{\min}(\Amat)}\,.
    \end{aligned}
\end{equation}
Here we have used $\inf_{\uvec_{1,2} \in \mathcal{N}(\Amat)}\|  \uvec_1-\uvec_2\|=0$ and that $\|\Amat^\dagger\| = 1/\sigma_{\min}(\Amat)$, where $\sigma_{\min}$ is the smallest singular value of $\Amat$.

\begin{remark}
Considering that both $S_\yvec$ and $S_\yvec'$ are linear spaces and are not overlapping, the largest distance between the two sets is $\infty$. This can be achieved by setting $\uvec = \Amat^\dagger \yvec + \uvec_0$ and $\uvec' = \Amat^\dagger \yvec' + n\, \uvec_0$ with $n\to \infty$.
\end{remark}

\subsection{Direct inversion in the stochastic setting}
To lift the discussion to the stochastic setting, we are looking for the solution to~\eqref{eqn:inversion_sto}. Similar to the deterministic setting, we would like to understand how the changes in $\rho_y$ propagate to $\rho_u$, both when $\mathcal{G}$ is invertible and when it is under-determined. These studies will lead to analogue results of~\eqref{eqn:stability_det_nonlinear} and \eqref{eqn:stability_det_linear}.

\subsubsection{When $\calG$ is invertible}
When $\calG$ is a bijection and $\calG^{-1}$ exists and is unique, we consider the data distribution $\rho_y^* \in \mathcal{P}(\mathcal{R})$. We can obtain 
\[
\rho_u^* = \calG^{-1}\#\rho_y^*\,,
\]
as one solution to the stochastic inverse problem~\eqref{eqn:inversion_sto}. This solution is unique. Suppose $\rho_{u,2}$ is another solution so that $\calG\#\rho_{u,2}=\rho_y^*$, then
\[
\rho^*_u = \calG^{-1}\# \rho^*_y =\calG^{-1}\# \left( \calG\# \rho_{u,2}\right)  = \left(\calG^{-1}\circ\calG\right)\# \rho_{u,2}= \rho_{u,2}\,.
\]

To evaluate the stability, the problem becomes more convoluted than that in the deterministic setting. The metric to quantify error (a distance between two probability measures) needs to be pre-determined. In this infinite dimensional setting, different metrics can lead to significantly different stability.

\begin{theorem}\label{thm:inv_stability}
Consider the push-forward of a map $\calG:\calD\longrightarrow\mathcal{R}$~\eqref{eqn:G_def} and assume $\calG$ is invertible, with its inverse $\calG^{-1}$ being $\beta$-continuous for a constant $C_{\calG^{-1}}$; see~\eqref{eqn:holder_cont}. Then given two data distributions $\rho^\ast_y\in \mathcal{P}(\calR)$ and its perturbation $\rho_y^\delta \in\mathcal{P}(\calR)$, we define $\rho^\ast_u = \calG^{-1}\# \rho^\ast_y$ and $\rho_u^\delta = \calG^{-1}\# \rho_y^\delta$ respectively and have the following stabilities:
    \begin{itemize}
        \item[1)] $\beta$-continuous in the Wasserstein sense:
\begin{align}\label{eq:Wass_stab}
            \wass_p(\rho^\ast_u, \rho_u^\delta) \leq C_{\calG^{-1}} \wass_p (\rho^\ast_y,  \rho_y^\delta)^\beta\,,
        \end{align}
        \item[2)] Lipschitz continuous in the  $f$-divergence sense:
          \begin{align*}
            D_f(\rho^\ast_u || \rho_u^\delta) = D_f  (\rho^\ast_y|| \rho_y^\delta)\,.
        \end{align*}
    \end{itemize}
\end{theorem}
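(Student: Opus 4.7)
The two claims are of different character: the first is a metric inequality, the second is an exact identity. I would treat them separately, since the mechanisms are independent.

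\textbf{Wasserstein bound.} My plan is to transport an optimal coupling from the $y$-side to the $u$-side through $\calG^{-1}$. Let $\gamma^\ast \in \Gamma(\rho_y^\ast,\rho_y^\delta)$ be an optimal plan for $\wass_p(\rho_y^\ast,\rho_y^\delta)$, and set $\tilde\gamma := (\calG^{-1},\calG^{-1})\#\gamma^\ast$. Because pushforward commutes with marginals, $\tilde\gamma \in \Gamma(\rho_u^\ast,\rho_u^\delta)$, so it is an admissible (not necessarily optimal) plan and gives an upper bound for $\wass_p(\rho_u^\ast,\rho_u^\delta)^p$. Substituting the $\beta$-H\"older assumption~\eqref{eqn:holder_cont} on $\calG^{-1}$ inside the integral yields
\begin{equation*}
\wass_p(\rho_u^\ast,\rho_u^\delta)^p \;\le\; \int \|\calG^{-1}(y_1)-\calG^{-1}(y_2)\|^p\,\rd\gamma^\ast \;\le\; C_{\calG^{-1}}^p \int \|y_1-y_2\|^{\beta p}\,\rd\gamma^\ast.
\end{equation*}
The only nontrivial step is to convert the right-hand side back into a Wasserstein quantity. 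Since $\beta\in(0,1]$, the map $t\mapsto t^\beta$ is concave, so Jensen's inequality applied to the probability measure $\gamma^\ast$ with integrand $\|y_1-y_2\|^p$ gives $\int \|y_1-y_2\|^{\beta p}\rd\gamma^\ast \le \left(\int \|y_1-y_2\|^p\rd\gamma^\ast\right)^\beta = \wass_p(\rho_y^\ast,\rho_y^\delta)^{\beta p}$, by optimality of $\gamma^\ast$. Taking $p$-th roots yields~\eqref{eq:Wass_stab}. The main subtlety here is remembering to use Jensen in the correct direction (concavity of $t\mapsto t^\beta$, not convexity), and observing that optimality is only needed on the $y$-side.

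\textbf{$f$-divergence identity.} The strategy is to show that the Radon--Nikodym derivative is preserved under a bijective pushforward, after which $D_f$ is invariant by a direct change of variables. Concretely, since $\calG$ is a bijection, for any Borel set $B \subset \calD$ one has $\rho_u^\ast(B) = \rho_y^\ast(\calG(B))$ and similarly for $\rho_u^\delta$. This implies $\rho_u^\ast \ll \rho_u^\delta$ iff $\rho_y^\ast \ll \rho_y^\delta$, and
\begin{equation*}
\frac{\rd \rho_u^\ast}{\rd \rho_u^\delta}(u) \;=\; \frac{\rd \rho_y^\ast}{\rd \rho_y^\delta}\bigl(\calG(u)\bigr)\qquad \rho_u^\delta\text{-a.e.}
\end{equation*}
Plugging this into the definition~\eqref{eq:f-divergence} and applying the change-of-variables formula $\int \phi(\calG(u))\,\rd\rho_u^\delta(u) = \int \phi(y)\,\rd\rho_y^\delta(y)$ with $\phi = f\circ(\rd\rho_y^\ast/\rd\rho_y^\delta)$ yields $D_f(\rho_u^\ast\,\|\,\rho_u^\delta) = D_f(\rho_y^\ast\,\|\,\rho_y^\delta)$.

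\textbf{Main obstacle.} The Wasserstein step is where one must be careful: the H\"older exponent $\beta$ reappears both inside the integrand (through~\eqref{eqn:holder_cont}) and through Jensen's inequality when it is pulled out of the integral. Getting the order of these two manipulations right is what produces the clean bound with the correct exponent $\beta$ and no extra factor of $p$. The $f$-divergence part is essentially a bookkeeping exercise once one writes down the transformation of the Radon--Nikodym derivative, so the work there is purely to justify the density identity for a general invertible $\calG$ (which holds regardless of absolute continuity with respect to Lebesgue measure).
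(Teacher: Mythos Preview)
Your proof is correct on both counts. For the Wasserstein bound, the paper simply cites an external result (Theorem~3.2 of Ernst et al.); your argument---pushing the optimal coupling through $(\calG^{-1},\calG^{-1})$, applying the H\"older bound pointwise, and then Jensen's inequality for the concave map $t\mapsto t^\beta$---is precisely the standard proof of that cited theorem, so there is no substantive difference.

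For the $f$-divergence identity you take a genuinely different route. The paper applies the data-processing inequality twice, once to $\calG$ and once to $\calG^{-1}$, to sandwich $D_f(\rho_u^\ast\|\rho_u^\delta)$ between two copies of $D_f(\rho_y^\ast\|\rho_y^\delta)$ and force equality. Your direct computation of the Radon--Nikodym derivative under a bijective pushforward is more elementary and fully self-contained: it does not invoke the DPI as a black box. The paper's approach, on the other hand, makes the underlying structure more visible---equality holds \emph{because} the DPI is saturated by invertible maps---and the one-sided inequality $D_f(\calG\#\mu\,\|\,\calG\#\nu)\le D_f(\mu\,\|\,\nu)$ survives for arbitrary measurable $\calG$, a fact the paper reuses in the under-determined case (Theorem~\ref{thm:under-stability}). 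Your argument uses bijectivity essentially at the Radon--Nikodym step and does not immediately suggest that generalization.
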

\begin{proof}
For the $p$-Wasserstein case, the result directly follows from~\cite[Theorem 3.2]{ernst2022wasserstein}. If $D_f$ is the $f$-divergence, then by the data processing inequality \cite{beaudry2012intuitive}:
\begin{equation}\label{eq:dpi}
D_f(\rho^\ast_y||\rho_y^\delta) = D_f(\calG\# \rho^\ast_u||\calG\# \rho_u^\delta) \leq 
D_f(\rho^\ast_{u}|| \rho_{u}^\delta)\,.
\end{equation}
On the other hand, we have 
\begin{align}\label{dpi2}
  D_f(\rho^\ast_{u}|| \rho_{u}^\delta) =  D_f(\calG^{-1}\# \rho_y||\calG^{-1}\# \rho_y^\delta)
   \leq 
D_f(\rho_{y}|| \rho_{y}^\delta)\,.
\end{align}
Combing \eqref{eq:dpi} and \eqref{dpi2} leads to the result. 
\end{proof}

Though straightforward in computation, this result is nevertheless alarming. The statement of the theorem suggests that when the perturbation is measured in $\wass_p$, we ``see" the continuity effect of the map $\calG^{-1}$, but such sensitivity is lost if $f$-divergence is used. A direct corollary derived from this is that when $\calG=\Amat$ is linear, $\calG^{-1}$ is Lipschitz continuous with index $\beta=1$ and the constant $C_{\calG^{-1}}=\frac{1}{\sigma_{\min}(\Amat)}$. On the contrary, $f$-divergence returns $1$-Lipschitz continuity in the reconstruction of $\rho_u$ even if $\calG$ is severely ill-conditioned.

\subsubsection{Under-determined case}
We discuss the situation when $\calG$ is not bijective in this subsection. Similar to the deterministic setting, when $\calG^{-1}$ cannot be uniquely defined on $\calR$, it should be understood as the pre-image, and the properties of the pre-image depend on the specific situation. We confine ourselves to the case where $\calG=\Amat$ is a linear map. As in the deterministic setting, we need to define the solution set for every given $\rho_y\in\mathcal{P}(\calR)$, and the distance between sets, as was done in~\eqref{eqn:def_distance_set_det}. In the current context, the solution set is simply:
\begin{equation}\label{eq:set_S}
S_{\rho_y} = \{\rho_u \in \mathcal{P}(\mathbb{R}^m) \ | \Amat\# \rho_u = \rho_y\}\,,
\end{equation}
and the distance between two sets $S_{\rho^1_y}$ and $S_{\rho^2_y}$ are, in the case of $f$-divergence:
\begin{equation}\label{eqn:def_dis_sto_f_divergence_inf}
d^f ( S_{\rho_y^1},  S_{\rho_y^2}) = \inf_{\substack{ \{\mu : \Amat\# \mu = \rho_y^1 \} \\ \{ \nu :\Amat\#\nu = \rho_y^2\} } } D_f (\mu || \nu)\,,
\end{equation}
and in the case of $\wass_2$:
\begin{equation}\label{eqn:def_dis_sto_wass_inf}
d^{\wass_2} ( S_{\rho_y^1},  S_{\rho_y^2}) = \inf_{\substack{ \{\mu : \Amat\# \mu = \rho_y^1 \} \\ \{ \nu :\Amat\#\nu = \rho_y^2\} } } \wass_2 (\mu , \nu)\,.
\end{equation}

As was suggested by Theorem~\ref{thm:inv_stability}, the sensitivity to the perturbation in $\rho_y$ heavily depends on the metric we use to evaluate the distances between measures. Indeed, we characterize the differences in Theorem~\ref{thm:under-stability} below. In its proof, we use the measure disintegration theorem~\cite[Thm.~5.3.1]{ambrosio2005gradient}. Here, we state a simplified version.
\begin{theorem}[Measure disintegration~\cite{disintegration}]\label{thm:disintegration}
    Let $\mu \in \mathcal{P}(Y)$, and consider $P:Y \to X$ a measurable function between the Radon spaces $Y$ and $X$. Define $\nu := P \# \mu$. Then there exists a $\nu$ a.e.~uniquely determined family of measures $\{\mu_x \}_{x\in X}\subset \mathcal{P}(Y)$ such that 
    \begin{itemize}
        \item The map $x \mapsto \mu_x(\Omega)$ is Borel measurable for all  Borel sets $\Omega$.
        \item For $\nu$ a.e.~$x$, $\mu_x (Y \setminus P^{-1}(x)) = 0$. 
        \item For every Borel measurable function $f:Y \to [0, \infty)$, 
        \begin{equation}\label{eq:disintegration}
            \int_Y f(y) \rd\mu(y) = \int_X \int_{P^{-1}(x)} f(y) \rd \mu_x(y) \rd \nu(x)\,.
        \end{equation}
    \end{itemize}
\end{theorem}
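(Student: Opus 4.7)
The plan is to build the family $\{\mu_x\}_{x\in X}$ by combining the Radon--Nikodym theorem (which produces, for each Borel set $\Omega\subset Y$, a conditional mass function $x\mapsto\mu(\Omega\cap P^{-1}(\cdot))$) with the Riesz representation theorem (which repackages those conditional masses into genuine probability measures). The Radon space hypothesis on $Y$ is what guarantees that a single $\nu$-null set can be chosen outside of which the whole construction is simultaneously valid.

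Concretely, for each bounded continuous $f:Y\to\mathbb{R}$, I would first define a signed measure $\lambda_f$ on $X$ by
\begin{equation*}
\lambda_f(A)\;=\;\int_{P^{-1}(A)} f(y)\,\rd\mu(y),\qquad A\subset X \text{ Borel}.
\end{equation*}
Since $|\lambda_f|\le\|f\|_\infty\,\nu$, we have $\lambda_f\ll\nu$, so the Radon--Nikodym theorem yields $T_f\in L^1(\nu)$ with $\lambda_f(A)=\int_A T_f\,\rd\nu$. The map $f\mapsto T_f$ is linear, positive, and $\nu$-a.e.\ bounded by $\|f\|_\infty$, with $T_{\mathbf 1}=1$ a.e. Using that $Y$ is a Radon space, the space $C_b(Y)$ admits a countable dense subalgebra $\mathcal{F}$ (in an appropriate topology, e.g.\ uniform convergence on compacts plus tightness) that separates points. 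Choosing one common $\nu$-null set $N$ outside of which $T_f(x)$ simultaneously satisfies linearity, positivity, and the norm bound for every $f\in\mathcal{F}$, one extends $f\mapsto T_f(x)$ by continuity to all of $C_b(Y)$. This gives, for each $x\notin N$, a positive unit-norm linear functional on $C_b(Y)$, and the Riesz representation theorem (in the Radon space setting) produces a unique Borel probability $\mu_x$ on $Y$ with $T_f(x)=\int f\,\rd\mu_x$; on $N$ set $\mu_x$ to be an arbitrary fixed probability. Measurability of $x\mapsto\mu_x(\Omega)$ then follows first for $\Omega$ the preimage of open sets by $f\in\mathcal{F}$ and is extended by the monotone class theorem.

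To verify the concentration property $\mu_x(Y\setminus P^{-1}(x))=0$, I would test against functions of the form $g\circ P$ for $g\in C_b(X)$: the definition of $T_f$ gives $T_{(g\circ P)\,f}(x)=g(x)\,T_f(x)$ for $\nu$-a.e.\ $x$, hence $\int (g\circ P)\,f\,\rd\mu_x=g(x)\int f\,\rd\mu_x$. Taking $f\equiv 1$, this forces $\mu_x$ to be supported in the fiber $P^{-1}(x)$ up to a $\nu$-null set of exceptional $x$. The integral identity~\eqref{eq:disintegration} is then immediate for $f\in\mathcal{F}$ from the construction of $T_f$, and extends to all nonnegative Borel $f$ by a standard monotone class argument.

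I expect the main obstacle to be the step where a \emph{single} $\nu$-null set must be selected outside of which the assignment $f\mapsto T_f(x)$ is a bona fide continuous linear functional on $C_b(Y)$. A priori, each $T_f$ is defined only up to its own $\nu$-null set, and the family $C_b(Y)$ is uncountable; this is precisely where the Radon space assumption enters, providing the countable dense subalgebra needed to reduce the problem to countably many a.e.\ relations that can be intersected. The remainder is measure-theoretic bookkeeping: uniqueness of $\{\mu_x\}$ in the $\nu$-a.e.\ sense follows from the integral identity applied to indicator functions of a generating $\pi$-system on $Y$.
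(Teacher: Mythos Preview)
The paper does not prove this statement: it is stated as a cited result (with references to \cite{disintegration} and \cite[Thm.~5.3.1]{ambrosio2005gradient}) and used as a tool in the proof of Theorem~\ref{thm:under-stability}. So there is no ``paper's own proof'' to compare against.

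That said, your sketch follows the standard textbook route (Radon--Nikodym to get conditional expectations $T_f$, countable dense family to synchronize the null sets, Riesz representation to package $f\mapsto T_f(x)$ as a measure $\mu_x$, monotone class to extend). The overall architecture is correct and is essentially how the result is proved in, e.g., Dellacherie--Meyer or Ambrosio--Gigli--Savar\'e. Two places where you would need to tighten the argument if you actually wrote it out: (i) the phrase ``countable dense subalgebra of $C_b(Y)$ in an appropriate topology'' is doing a lot of work---the clean way to exploit the Radon hypothesis is to pass to a Borel isomorphism with a subset of $[0,1]^{\mathbb{N}}$ (or a compact metric space) and work with a countable algebra generating the Borel $\sigma$-algebra there, rather than trying to approximate in $C_b$ directly; and (ii) the step ``$T_{(g\circ P)f}(x)=g(x)T_f(x)$ forces support on $P^{-1}(x)$'' requires a separability/density argument on the $X$ side as well, since you again need a single null set outside of which this holds for \emph{all} $g$. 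Neither is a genuine gap, just places where the bookkeeping is heavier than the sketch suggests.
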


\begin{theorem}\label{thm:under-stability}
Consider a matrix $\Amat \in \bbR^{n\times m}$ with $m > n$, and $\rho_y^1, \rho_y^2 \in \calP(\calR)$. Define $S_{\rho_y^1}, S_{\rho_y^2}\subset \calP(\calD)$ the two solution sets corresponding to data distributions $\rho_y^1, \rho_y^2$, respectively as in~\eqref{eq:set_S}. Then
\begin{itemize}
\item[1)] Lipschitz continuous in the Wasserstein sense:
\begin{equation}\label{eq:Wasserstein_stability_inf}
d^{\wass_2} ( S_{\rho_y^1},  S_{\rho_y^2}) =  \wass_2(\Amat^\dagger \#\rho_y^1, \Amat^\dagger \#\rho_y^2) \leq \left(\sigma_{\min}(\Amat)\right)^{-1}\wass_2( \rho_y^1,  \rho_y^2)  \,, 
\end{equation}
\item[2)] Lipschitz continuous in the $f$-divergence sense:
\begin{eqnarray}\label{eqn:f_divergence_stability}
      d^f ( S_{\rho_y^1},  S_{\rho_y^2}) = D_f(\rho_y^1 || \rho_y^2)\,.\label{eq:f_divergence_stability_inf}
    \end{eqnarray}
\end{itemize}
\end{theorem}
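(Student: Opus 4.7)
Both inequalities rest on the orthogonal decomposition $\bbR^m = \Row(\Amat) \oplus \calN(\Amat)$ already used in~\eqref{eqn:split_set}, together with the identity $\Amat\Amat^\dagger = I_n$, which holds in the underdetermined case by~\eqref{eqn:A_penrose}. The strategy is the same for both parts: prove a lower bound valid for every admissible pair $(\mu,\nu) \in S_{\rho_y^1}\times S_{\rho_y^2}$, and then match it by exhibiting a specific pair $(\mu_\star,\nu_\star)$.

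\textbf{Wasserstein case.} Write $P_R := \Amat^\dagger \Amat$ for the orthogonal projection onto $\Row(\Amat)$. For any optimal coupling $\gamma$ between $\mu\in S_{\rho_y^1}$ and $\nu\in S_{\rho_y^2}$, splitting $\|u-u'\|^2$ into row-space and null-space components gives
\[
\wass_2^2(\mu,\nu) = \int \|u-u'\|^2\,\rd\gamma \geq \int \|P_R(u-u')\|^2\,\rd\gamma \geq \wass_2^2(P_R\#\mu,\, P_R\#\nu),
\]
and the push-forward constraint forces $P_R\#\mu = \Amat^\dagger\#\rho_y^1$ and $P_R\#\nu = \Amat^\dagger\#\rho_y^2$. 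For the matching upper bound I take $\mu_\star = \Amat^\dagger\#\rho_y^1$ and $\nu_\star = \Amat^\dagger\#\rho_y^2$; both lie in the corresponding solution sets since $\Amat\Amat^\dagger = I_n$. This establishes the equality in~\eqref{eq:Wasserstein_stability_inf}. To obtain the bound involving $\sigma_{\min}(\Amat)$, push any optimal $\wass_2$ coupling of $(\rho_y^1,\rho_y^2)$ forward through the Lipschitz map $(\Amat^\dagger,\Amat^\dagger)$ and use $\|\Amat^\dagger\|_{\mathrm{op}} = 1/\sigma_{\min}(\Amat)$.

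\textbf{$f$-divergence case.} The lower bound follows immediately from the data processing inequality, in the same spirit as~\eqref{eq:dpi}: for any admissible $(\mu,\nu)$, $D_f(\mu \,\|\, \nu) \geq D_f(\Amat\#\mu \,\|\, \Amat\#\nu) = D_f(\rho_y^1 \,\|\, \rho_y^2)$. For the matching construction, pick any auxiliary probability measure $\eta$ on $\calN(\Amat)$ and define the affine injection $T:\bbR^n \times \calN(\Amat) \to \bbR^m$ by $T(y,z) := \Amat^\dagger y + z$. Set $\mu_\star := T\#(\rho_y^1 \otimes \eta)$ and $\nu_\star := T\#(\rho_y^2 \otimes \eta)$. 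Since $\Amat\circ T(y,z) = y$, both push-forward constraints are satisfied. Because $T$ is a bijection onto its image, the Radon--Nikodym derivative $\rd\mu_\star/\rd\nu_\star$ evaluated at $T(y,z)$ depends only on the $y$-coordinate and equals $(\rd\rho_y^1/\rd\rho_y^2)(y)$, so the $\eta$-factor cancels upon integration and yields $D_f(\mu_\star \,\|\, \nu_\star) = D_f(\rho_y^1 \,\|\, \rho_y^2)$.

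\textbf{Main obstacle.} The Wasserstein part is largely mechanical once the $P_R$-decomposition is noted. The substantive step is the $f$-divergence upper bound: the challenge is to exhibit a pair that \emph{saturates} the data processing inequality. The product-type construction above---a measure that is, via the disintegration framework of \Cref{thm:disintegration}, effectively constant along each fiber $\Amat^{-1}(y)$---is the natural candidate, but the Radon--Nikodym identification must be justified rigorously, and the degenerate case $\rho_y^1 \not\ll \rho_y^2$ (where both sides of~\eqref{eq:f_divergence_stability_inf} equal $+\infty$) needs to be handled separately.
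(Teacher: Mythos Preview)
Your proposal is correct, and the Wasserstein part follows the paper's argument essentially verbatim: orthogonal splitting via $P_R=\Amat^\dagger\Amat$, the lower bound $\wass_2^2(\mu,\nu)\ge\wass_2^2(P_R\#\mu,P_R\#\nu)$, the identification $P_R\#\mu=\Amat^\dagger\#\rho_y^1$, and the matching upper bound from $\Amat^\dagger\#\rho_y^i\in S_{\rho_y^i}$.

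For the $f$-divergence upper bound you take a slightly different route. The paper observes that any linear right inverse $\mathsf{B}:\calP(\calR)\to\calP(\calD)$ (for instance $\mathsf{B}=\Amat^\dagger\#$) is a Markov kernel, and then applies the data processing inequality a \emph{second} time---now in the backward direction---to obtain $D_f(\mathsf{B}(\rho_y^1)\,\|\,\mathsf{B}(\rho_y^2))\le D_f(\rho_y^1\,\|\,\rho_y^2)$, which immediately matches the lower bound. Your approach instead constructs $\mu_\star=T\#(\rho_y^1\otimes\eta)$, $\nu_\star=T\#(\rho_y^2\otimes\eta)$ and computes the Radon--Nikodym derivative explicitly, using that $T$ is a bijection and the $\eta$-factor integrates out. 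Both arguments are valid; the paper's is slicker because it never touches densities, while yours is more constructive and makes the ``constant along fibers'' intuition (which you correctly flag as the key point) fully explicit. Your remark about handling $\rho_y^1\not\ll\rho_y^2$ separately is also present in the paper's proof.
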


This result is a one-to-one correspondence to Theorem~\ref{thm:inv_stability} in the setting where $\calG^{-1}$ is non-unique. Like before, the Wasserstein distance is sensitive to the behavior of $\calG$ while the $f$-divergence is blind to the conditioning of this map. However, the proof is much more convoluted.

\begin{proof}[Proof of~\eqref{eq:Wasserstein_stability_inf}] We first expand the definition~\eqref{eqn:def_dis_sto_wass_inf}. To do so, we adopt the orthogonal decomposition \eqref{eqn:split_set}. For all $u\in\bbR^m$:
\begin{equation}\label{eqn:orth_decomp}
u = P^R(u)+P^\perp(u)=u_2+u_1:=\Amat^\dagger \Amat u + (\Imat-\Amat^\dagger \Amat) u\,,
\end{equation}
where $P^R(u)$ projects $u$ onto $\text{Row}(\Amat)$ and $P^\perp(u)$ projects $u$ to $\mathcal{N}(\Amat)$. Furthermore, define
\[
P = P^R \otimes P^R\quad\text{with}\quad P(u, v) = (u_2, v_2)\,.
\]
We have the pre-image of $P^{-1}$, for $(u_2, v_2)\in Row(\Amat)\times Row(\Amat)$:
\[
P^{-1}(u_2, v_2) = \{ (u_2 + u_1, v_2 + v_1) | \forall\, u_1, v_1 \in \mathcal{N}(\Amat) \}\,.
\]
This separation allows us to control the 2-Wasserstein metric~\eqref{eq:wass_p}:
\begin{align*}
    \wass^2_2 (\mu , \nu)  &= \inf_{\gamma \in \Gamma(\mu, \nu)} \int_{\mathbb{R}^m \times \mathbb{R}^m } \|u - v\|^2 \rd\gamma(u, v) \\
    & = \inf_{\gamma  \in \Gamma(\mu, \nu)} \left( \int_{\mathbb{R}^m \times \mathbb{R}^m } \|u_1  - v_1\|^2 \rd \gamma(u, v)   +   \int_{\mathbb{R}^m \times \mathbb{R}^m }  \|u_2 - v_2\|^2 \rd \gamma(u, v) \right)\\
    &\geq  \inf_{\gamma \in \Gamma(\mu, \nu)} \int_{\mathbb{R}^m \times \mathbb{R}^m } \|u_2  - v_2\|^2 \rd \gamma(u, v)\\
     &=  \inf_{\gamma \in \Gamma(\mu, \nu)}\int\limits_{ \Row(\Amat)^2 } \int\limits_{\mathcal{N}(\Amat)^2} \|u_2 - v_2\|^2 \rd\gamma_{u_2, v_2}(u_1, v_1) \rd\left(P\#\gamma\right) (u_2, v_2)\\
    &= \inf_{\gamma \in \Gamma(\mu, \nu)}\int\limits_{\Row(\Amat)^2} \|u_2 - v_2\|^2 \bigg \{\int\limits_{\mathcal{N}(\Amat)^2} \rd\gamma_{u_2, v_2}(u_1, v_1) \bigg\} \rd\left(P\#\gamma\right)(u_2, v_2)\\
    &=  \inf_{\gamma \in \Gamma(\mu, \nu)}\int\limits_{\Row(\Amat)^2}\|u_2 - v_2\|^2  \rd \left(P\#\gamma\right)(u_2, v_2)\,.\numberthis\label{eq:wasserstein_0}
\end{align*}
where we applied the Measure Disintegration Theorem~\ref{thm:disintegration} on the coupling $\gamma$ with $f(u, v) = \|u_2 - v_2\|^2$ and deployed Equation~\eqref{eq:disintegration}. Noticing that $P\#\gamma$ is a measure on $\Row(\Amat)^2$, for any Borel measurable set $\Omega \subset \Row(\Amat)$, we have
\begin{align*}
 (P\#\gamma)( \Omega\times \Row(\Amat)) &= \gamma(P^{-1}(\Omega\times \Row(\Amat))) \\ &
 = \gamma ((P^R)^{-1}(\Omega) \times \mathbb{R}^m ) = \mu ((P^R)^{-1}(\Omega)) = (P^R\#\mu) (\Omega)\,,
\end{align*}
and similarly $(P\#\gamma)( \Row(\Amat) \times \Omega) = (P^R \# \nu)(\Omega)$. Hence, $P\# \gamma \in \Gamma(P^R \# \mu, P^R \# \nu)$ and~\eqref{eq:wasserstein_0} can be further simplified to
\begin{equation}\label{eq:wasserstein_1}
\begin{aligned}
     \wass_2^2(\mu, \nu) &\geq \inf_{\pi \in \Gamma(P^R \# \mu, P^R \# \nu)}\int_{\Row(A)^2} \|u_2 - v_2\|^2 \rd\pi(u_2, v_2)\\
     &= \wass_2^2(P^R \# \mu, P^R \# \nu)\,.
     \end{aligned}
 \end{equation}
Recall the requirement that $\Amat\#\mu = \rho_y^1$ and $\Amat\#\nu = \rho_y^2$. Then  $\forall \varphi:\Row(\Amat) \rightarrow \mathbb{R}$,  we have
\begin{align*}
    \int_{\Row(\Amat)} \varphi(u_2) \rd \left(\Amat^\dagger\# \rho_y^1\right)(u_2) &= \int_{\mathbb{R}^n} \varphi(\Amat^\dagger y) \rd\rho_y^1(y)  = \int_{\mathbb{R}^n} \varphi(\Amat^\dagger y) \rd (\Amat\#\mu)(y) \\
    &=  \int_{\mathbb{R}^m} \varphi(\Amat^\dagger \Amat u) \rd\mu(u)=  \int_{\mathbb{R}^m} \varphi\circ P^R(u) \rd\mu(u) \\
    &= \int_{\text{Row}(\Amat)} \varphi(u_2) \rd(P^R \# \mu)(u_2) \,,
\end{align*}
meaning that $
P^R \# \mu = \Amat^\dagger  \# \rho_y^1 $. A similar argument yields $P^R \# \nu = \Amat^\dagger \# \rho_y^2$. Therefore,~\eqref{eq:wasserstein_1} becomes
$$
\wass_2(\mu, \nu) \geq \wass_2(\Amat^\dagger \# \rho_y^1, \Amat^\dagger \# \rho_y^2), \ \forall\, \mu,\nu\, \text{ satisfying }\Amat\#\mu = \rho_y^1, \ \Amat\#\nu = \rho_y^2\,.
$$
Remembering that $\Amat^\dagger \#\rho_y^i \in S_{\rho_y^i}$ for $i = 1, 2$, we obtain
$$ \wass_2(\Amat^\dagger \#\rho_y^1, \Amat^\dagger \#\rho_y^2)\leq d_{\inf}^{\wass_2} (S_{\rho_y^1}, S_{\rho_y^2}) \leq  \wass_2(\Amat^\dagger \#\rho_y^1, \Amat^\dagger \#\rho_y^2)\,,$$
which implies Equation~\eqref{eq:Wasserstein_stability_inf}. The inequality in \eqref{eq:Wasserstein_stability_inf} follows from~\cite[Theorem 3.2]{ernst2022wasserstein}.
\end{proof}

\begin{proof}[Proof of~\eqref{eqn:f_divergence_stability}] We first note that if $\rho_y^1$ is not absolutely continuous with respect to $\rho_y^2$, both sides of~\eqref{eq:f_divergence_stability_inf} are infinite, and the result naturally holds. Therefore, we will assume that $D_f(\rho_y^1 || \rho_y^2) < \infty$ hereafter. Based on the data processing inequality:
\[
D_f(\rho_y^1 || \rho_y^2) = D_f(\Amat\# \mu ||\Amat\# \nu) \leq D_f(\mu || \nu), \ \forall \nu \in S_{\rho_y^2},\,\, \forall  \mu \in S_{\rho_y^1}\,.
\]
Hence, we obtain a lower bound for the infimum:
\begin{equation}\label{eqn:dp_forward}
D_f(\rho_y^1 \|\rho_y^2) \leq d_{\inf}^f ( S_{\rho_y^1},  S_{\rho_y^2}) = \inf_{\substack{ \{\mu : \Amat\# \mu = \rho_y^1 \} \\ \{ \nu :\Amat\#\nu = \rho_y^2\} } } D_f (\mu || \nu) \,.
\end{equation}
Let $\mathsf{B}$ be any inverse map that achieves:
$$\mathsf{B}:\mathcal{P}(\calR) \rightarrow \mathcal{P}(\calD)\,, \,\,\mathsf{B}(\rho_y) = \rho_u  \text{ such that }\Amat\# \rho_u = \rho_y\,.$$
One such example is to set $\mathsf{B}=\Amat^\dagger\#$. Let $\rho_u^1 = \mathsf{B}(\rho_y^1)$ and $ \rho_u^2 = \mathsf{B}(\rho_y^2)$. Define
\[
k(\rd x, y):= \mathsf{B}(\delta_y)(\rd x), \ \forall y \in \calR.
\]
Then considering $\mathsf{B}(\lambda_1\rho_1 + \lambda_2\rho_2) = \lambda_1 \mathsf{B}(\rho_1) + \lambda_2 \mathsf{B}(\rho_2)$ for all $\lambda_1,\lambda_2 \geq 0$ satisfying $\lambda_1 + \lambda_2 = 1$, we have:
\[
\mathsf{B}(\rho)(\Omega) = \int_\Omega \int_\calR k(\rd x, y) \rd \rho(y)\,,
\]
meaning $\mathsf{B}(\rho)$ represents a Markov transition over $\rho\in \calP(\calR)$. Thus, according to the data processing inequality again on $\mathsf{B}$: 
\begin{equation*}
D_f(\rho_u^1 || \rho_u^2 ) =  D_f(\mathsf{B}(\rho_y^1) || \mathsf{B}( \rho_y^2) )\leq D_f(\rho_y^1 || \rho_y^2)\,.
\end{equation*}
Combining with~\eqref{eqn:dp_forward}, we arrive at~\eqref{eq:f_divergence_stability_inf}.
\end{proof}

\section{Problem II: variational formulation}\label{sec:problemII}
This section is dedicated to Problem II: the variational formulation, presented in the form of~\eqref{eqn:optimization_sto}. Data $\rho_y^\ast$ (or its perturbation $\rho^\delta_y$) is given. The clean data distribution $\rho_y^\ast$ is known to be produced by a push-forward map on a to-be-reconstructed $\rho^*_u$. An optimization formulation is a natural candidate to use for finding this $\rho^*_u$. When the direct inversion is either unavailable explicitly or ill-conditioned, this optimization formulation, in comparison to direct inversion, provides more flexibility for us to numerically handle the conditioning through the design of the objective functional.

In this section, we analyze two designs of the objective functional. In the first formulation, the objective is the most straightforward way of measuring the distance between the simulated data and the given data, i.e., $E[\rho_u; \rho_y^\delta] := D(\calG\#\rho_u, \rho_y^\delta)$. With this definition, we rewrite~\eqref{eqn:optimization_sto}:
\begin{equation}\label{eqn:inf_SIP}
\rho_u^\delta=\argmin_{\rho_u \in \calP(\calD)}\; E[\rho_u;\rho^\delta_y] := \argmin_{\rho_u \in \calP(\calD)}\; D(\mathcal{G}\#\rho_u , \rho^\delta_y) \,,
\end{equation}
where $\calP(\calD)$ is the feasible set. The set may not necessarily be metricized. Here, $D$ can be any user-chosen distance or divergence between two probability measures. The given data $\rho_y^\delta$ is $\delta$-away from the ground truth $\rho^\ast_y=\calG\#\rho_u^\ast\in\calP(\calR)$ according to a certain metric/divergence. This objective functional is the most straightforward formulation derived from Problem I. We examine some theoretical foundations in Section~\ref{sec:vf_existence}, including the existence of the minimizer for the variational problem~\eqref{eqn:inf_SIP}.

The second formulation aims to address the ill-conditioning issue of the inversion. Just as in the deterministic setting where a regularization term is added to improve the conditioning of the problem, when the data given and the to-be-reconstructed objects are both probability measures, regularization also provides a mean to tame instability. In this setting,~\eqref{eqn:optimization_sto} changes to:
\begin{equation}\label{eqn:regularize}
\rho_u^\delta=\argmin_{\rho_u \in \calP(\calD)}\; E[\rho_u;\rho^\delta_y] := \argmin_{\rho_u \in \calP(\calD)}\;D(\mathcal{G}\#\rho_u, \rho_y^\delta ) + \mathsf{R}(\rho_u)\,,
\end{equation}
where $\mathsf{R}:\mathcal{P}(\mathcal{D}) \to [0, \infty)$ is a specifically designed regularizer. Depending on the structure of $\mathsf{R}$, different properties are enhanced. We study various regularizers in Section~\ref{sec:regularize}.

\subsection{Existence of the solution to the variational framework}\label{sec:vf_existence}
First, we study the variational framework in its most straightforward formulation~\eqref{eqn:inf_SIP}, where the objective functional is the plain evaluation of the  distance $D$ between simulated data $\calG\#\rho_u$ and the reference data distribution $\rho_y^\delta$. 

Even in this very simple setting, noting that the problem has an infinite dimensional feasible set, the existence may not be completely trivial. In general, a converging sequence can easily converge to a point outside the feasible set if the set is not compact. Certain conditions on the regularity of $E[\rho_u;\rho_y^\delta]$ and the closeness of the feasible set need to be specified. To this end, we cite the following general result on the existence of minimizers; see for instance \cite{ambrosio2005gradient, santambrogio2015optimal, calder2020calculus}.
\begin{theorem} \label{existence}
    We consider the topology induced by the weak convergence over the space of probability measures $\calP(X)$ where $X$ is a Polish space. If the functional $E: \mathcal P(X) \rightarrow [0, \infty)$ is 
    \begin{itemize}
        \item lower semicontinuous (l.s.c.), i.e., for every $\rho_u^1 \in \mathcal P(X)$
    \[
    E(\rho_u^1) \leq \liminf_{\rho_u^2 \rightarrow \rho^1_u} E(\rho^2_u) \,, \text{~~where~} \rho_u^2 \rightarrow \rho^1_u \text{~ in the topology of~} \mathcal P(X)\,,
    \]
    \item coercive, i.e., for $\lambda > \inf_{\rho_u\in \mathcal P(X)} E(\rho_u)$, the set
    \[
    A = \{\rho_u \in \mathcal P(X): ~ E(\rho_u)< \lambda \}
    \]
    is sequentially precompact,
    \end{itemize}
    then there exists $\rho_u^* \in \mathcal P(X)$ such that $ E(\rho_u^*) = \min_{\rho_u\in \mathcal P(X)} E(\rho_u)\,.$
\end{theorem}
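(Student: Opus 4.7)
The proof is a direct application of the classical direct method of the calculus of variations, so the plan is a standard compactness-plus-lower-semicontinuity argument. First, I would set $\alpha := \inf_{\rho_u \in \mathcal{P}(X)} E(\rho_u)$, which lies in $[0,\infty)$ since $E$ is nonnegative. By definition of the infimum there exists a minimizing sequence $\{\rho_n\} \subset \mathcal{P}(X)$ with $E(\rho_n) \to \alpha$.

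Second, I would extract a convergent subsequence using coercivity. Picking any $\lambda > \alpha$, the convergence $E(\rho_n) \to \alpha$ implies $E(\rho_n) < \lambda$ for all $n$ sufficiently large, so from some index onward the tail of the sequence lies in the sublevel set $A = \{\rho_u \in \mathcal{P}(X) : E(\rho_u) < \lambda\}$. By the coercivity hypothesis, $A$ is sequentially precompact in the weak topology on $\mathcal{P}(X)$, so there exist a subsequence $\{\rho_{n_k}\}$ and a limit $\rho_u^* \in \mathcal{P}(X)$ such that $\rho_{n_k} \to \rho_u^*$ in the weak topology.

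Third, I would pass to the limit along this subsequence using lower semicontinuity of $E$: this yields $E(\rho_u^*) \leq \liminf_{k \to \infty} E(\rho_{n_k}) = \alpha$. The reverse inequality $E(\rho_u^*) \geq \alpha$ holds trivially since $\alpha$ is the infimum over all of $\mathcal{P}(X)$ and $\rho_u^* \in \mathcal{P}(X)$. Combining the two yields $E(\rho_u^*) = \alpha$, so $\rho_u^*$ is a minimizer.

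No real obstacle remains, since the two nontrivial ingredients (existence of a cluster point, and preservation of the value in the limit) are exactly the coercivity and lower semicontinuity hypotheses. The only subtlety worth flagging is that "sequentially precompact" must be interpreted so that the limit point of any convergent subsequence lies in $\mathcal{P}(X)$ itself (i.e., mass is not lost in the limit); under the weak topology on probability measures this is essentially a tightness condition, and it is the genuine probabilistic content hidden inside the coercivity hypothesis. All subsequent applications of this theorem in the paper will reduce to verifying these two properties for concrete choices of $E$.
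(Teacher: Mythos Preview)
Your proof is correct and is precisely the standard direct-method argument. The paper itself does not prove this theorem; it states the result and cites \cite{ambrosio2005gradient, santambrogio2015optimal, calder2020calculus} for the proof, so there is nothing to compare against beyond noting that the argument you give is the one found in those references.
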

Theorem~\ref{existence} gives a quick corollary in our setting.
\begin{theorem}\label{thm:our_existence}
Let $\calD$ be a Polish space. For any fixed $\rho^\delta_y$, if
    \begin{itemize}
        \item $D(\,\cdot\,, \rho^\delta_y): \mathcal P(\mathbb{R}^n)\rightarrow [0,\infty]$ is lower semicontinuous and coercive with respect to the topology chosen for $\calP(\mathbb{R}^n)$,
        \item $\mathcal G: \calD\rightarrow \calR := \calG(\calD)$ is open and continuous,
    \end{itemize}
    then there exists a minimizer of \eqref{eqn:inf_SIP} in $\calP(\calD)$.
\end{theorem}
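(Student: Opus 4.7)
The plan is to apply Theorem~\ref{existence} to the functional $E[\rho_u;\rho_y^\delta]=D(\mathcal{G}\#\rho_u,\rho_y^\delta)$ on $\mathcal{P}(\calD)$ endowed with the weak topology, so the task reduces to checking lower-semicontinuity and coercivity of $E$. It is convenient to factor $E=D(\cdot,\rho_y^\delta)\circ T$, where $T:\mathcal{P}(\calD)\to\mathcal{P}(\mathbb{R}^n)$ is the push-forward operator $T(\rho_u):=\mathcal{G}\#\rho_u$.

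The lower-semicontinuity step is routine. For every $\varphi\in C_b(\mathbb{R}^n)$, continuity of $\mathcal{G}$ puts $\varphi\circ\mathcal{G}$ in $C_b(\calD)$, and the change-of-variables identity $\int\varphi\,\rd T(\rho_u)=\int\varphi\circ\mathcal{G}\,\rd\rho_u$ shows that $T$ is weakly continuous. Composing the continuous $T$ with the assumed lower-semicontinuous $D(\cdot,\rho_y^\delta)$ immediately yields lower-semicontinuity of $E$.

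Coercivity is the heart of the proof, and it is where openness of $\mathcal{G}$ enters. A sub-level set of $E$ equals $T^{-1}(B_\lambda)$ with $B_\lambda:=\{\nu:D(\nu,\rho_y^\delta)<\lambda\}$; this $B_\lambda$ is sequentially precompact by coercivity of $D$, but $T^{-1}(B_\lambda)$ in general is not, because mass along the kernel of $\mathcal{G}$ can drift to infinity without affecting $T(\rho_u)$. The remedy is to pass through a measurable section. Openness of $\mathcal{G}$ is equivalent to the set-valued map $y\mapsto\mathcal{G}^{-1}(y)$ from $\calR$ into closed subsets of $\calD$ being lower semicontinuous, so the Kuratowski--Ryll-Nardzewski selection theorem supplies a Borel map $s:\calR\to\calD$ with $\mathcal{G}\circ s=\mathrm{id}_\calR$. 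Because $E(\rho_u)$ depends on $\rho_u$ only through $T(\rho_u)$, the infimum of $E$ on $\mathcal{P}(\calD)$ coincides with the infimum of $D(\cdot,\rho_y^\delta)$ on the smaller subset $\{s\#\nu:\nu\in\mathcal{P}(\calR)\}\subset\mathcal{P}(\calD)$, on which coercivity of $D$ transfers directly to coercivity of the restricted $E$.

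Assembling the pieces, I would take a minimizing sequence $\{\rho_u^n\}$, pass to the images $\nu^n:=\mathcal{G}\#\rho_u^n\in B_{\inf E+1}$, extract a weakly convergent subsequence $\nu^{n_k}\to\nu^*$ via coercivity of $D$, invoke lower-semicontinuity to conclude $D(\nu^*,\rho_y^\delta)\leq\inf E$, and declare $\rho_u^*:=s\#\nu^*$ the minimizer. The delicate point hidden in this last step is confirming that $\nu^*$ is still supported in $\calR=\mathcal{G}(\calD)$ so that $s\#\nu^*$ makes sense as an element of $\mathcal{P}(\calD)$: since $\calR$ is open and the Portmanteau theorem only controls closed sets, mass could in principle escape to $\partial\calR$ under the weak limit, and this must be excluded using the finer structure of $D$ (for $f$-divergences, finiteness of $D(\nu,\rho_y^\delta)$ forces $\nu\ll\rho_y^\delta\in\mathcal{P}(\calR)$; for $\wass_p$, a tightness argument exploiting that $\mathcal{G}$ is an open map onto its image). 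This boundary-escape issue is the main obstacle, and the openness hypothesis on $\mathcal{G}$ is precisely what makes it tractable.
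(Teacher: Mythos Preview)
Your overall strategy—reduce to minimizing $D(\cdot,\rho_y^\delta)$ over the image and then pull a minimizer back to $\calP(\calD)$—coincides with the paper's, but the execution and the role of the openness hypothesis differ.

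The paper uses openness of $\calG$ not for a selection theorem but to conclude that $\calR$ is Polish (open continuous surjections preserve Polishness, \cite[Theorem~7.5]{hjorth2000classification}). It then applies Theorem~\ref{existence} directly with $X=\calR$, after arguing that lower-semicontinuity and coercivity of $D(\cdot,\rho_y^\delta)$ are inherited when the domain is restricted from $\calP(\mathbb{R}^n)$ to $\calP(\calR)$. Working intrinsically in $\calP(\calR)$ means the paper never has to extract a weak limit in the ambient $\calP(\mathbb{R}^n)$ and then argue that it lands back in $\calP(\calR)$—precisely the boundary-escape issue you flag as the main obstacle but leave unresolved in generality. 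Your proposed case-by-case fixes (absolute continuity for $f$-divergences, a tightness argument for $\wass_p$) go beyond the stated hypotheses, so as written your argument has a genuine gap at that point. Note also that your parenthetical ``since $\calR$ is open'' is not justified: the hypothesis is that $\calG$ is open as a map onto $\calR$ with its subspace topology, which does not force $\calR$ to be open in $\mathbb{R}^n$.

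For the lift back to $\calP(\calD)$, your Kuratowski--Ryll-Nardzewski section is a correct (and arguably more careful) device to show that every $\nu\in\calP(\calR)$ arises as some $\calG\#\rho_u$; the paper achieves the same surjectivity via a short quotient-set construction. Either route yields $\inf_{\calP(\calD)}E=\inf_{\calP(\calR)}D(\cdot,\rho_y^\delta)$, after which any minimizer on the right lifts to one on the left.
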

\begin{proof}
To see this, we first claim:
\begin{equation}\label{eq:two_inf_equal}
\inf_{\tilde\rho_y \in \calP(\calR)} D(\tilde\rho_y , \rho^\delta_y) = \inf_{\rho_u \in \calP(\calD )} D(\calG \# \rho_u , \rho^\delta_y)\,.
\end{equation}
This amounts to proving that 
\begin{equation}\label{eq:two_sets_equal}
\{\calG \# \rho_u,\,\, \forall \rho_u \in \calP(\calD)\} = \calP(\calR)\,.
\end{equation}
The ``$\subseteq$'' direction is apparent, and to show ``$\supseteq$'', we note that for any $y \in \calR$, $\calG^{-1}(y) \neq \emptyset$. This allows us to define an equivalent relation $\sim$ on $\calD$: $u_1\sim u_2$ if $\calG(u_1) = \calG(u_2)$. We can then define the quotient set $\Omega := \calD \setminus \hspace{-1mm} \sim$. Consequently, $\calG:\Omega \to \calR$ is a bijection with a well-defined inverse $\calG^{-1}$. For any $\rho_y \in \calP(\calR)$, we identify one distribution $\rho_u:= \calG^{-1} \# \rho_y \in \calP(\Omega)$ satisfying $\calG\# \rho_u  = \rho_y$. Therefore, 
\[
\calP(\calR) \subseteq \{\calG \# \rho_u,\,\, \forall \rho_u \in \calP(\Omega)\}  \subseteq \{\calG \# \rho_u,\,\, \forall \rho_u \in \calP(\calD)\}\,.
\]
This proves the ``$\supseteq$'' direction of~\eqref{eq:two_sets_equal}. As a result, \eqref{eq:two_inf_equal} holds. 

In the second step, we prove there exists a minimizer for
\[\inf_{\rho_y \in \calP(\calR)} D(\rho_y , \rho_y^\delta)\,.
\]
Since $\calD$ is Polish and $\calG:\calD \to \calR$ is open, continuous and onto, then $\calR$ is also Polish~\cite[Theorem 7.5]{hjorth2000classification}. %
Recall by assumption, $D(\cdot, \rho_y^\delta)$ as a functional over $\calP(\mathbb{R}^n)$ is l.s.c.~and coercive with respect to the weak convergence topology. When restricting the domain from $\calP(\bbR^d)$ to $\calP(\calR)$, $D(\,\cdot\, , \rho_y^\delta)$ still inherits these two properties. For the lower semi-continuity, consider any sequence $\{\rho_y^{n}\}\in\calP(\calR) \subseteq  \calP(\bbR^d)$ with weak limit   $\rho_y^{n} \rightarrow \widetilde{\rho_y}$ as $n \to \infty$. Note that $ \widetilde{\rho_y}\in \calP(\calR) $ due to the closedness of $\calP(\calR)$ under weak topology.  Since $D(\,\cdot\, , \rho_y^\delta)$ is l.s.c.~over $\calP(\bbR^d)$, we have
\[
E(\widetilde{\rho_y} ) \leq \liminf_{\rho_y^n \to \widetilde{\rho_y} } E(\rho_y^n) \,,
\]
which implies that $D(\,\cdot\, , \rho_y^\delta)$ is l.s.c.~over $\calP(\calR)$. Coercivity holds because a subset of a sequentially precompact set is still sequentially precompact. Therefore, by~\Cref{existence}, $D(\,\cdot\, , \rho_y^\delta)$ has a minimizer in $\calP(\calR)$, and by~\eqref{eq:two_inf_equal} and~\eqref{eq:two_sets_equal}, this corresponds to a minimizer $\rho_u \in \mathcal{P}(\calD)$  to~\eqref{eqn:inf_SIP}.
\end{proof}

\begin{remark}
Many common choices of divergences/metrics $D$ satisfy the conditions in \Cref{thm:our_existence}. For example, if $D$ is the $p$-Wasserstein metric, then the l.s.c.~of $E(\rho_u)$ follows from the l.s.c.~of the p-Wasserstein distance; see \cite[Corollary 6.11 and Remark 6.12]{villani2009optimal}. The coercivity follows from the fact that the finite ball in the $p$-Wasserstein metric is weakly compact~\cite[Theorem 1]{yue2022linear}. In the example of $\KL$-divergence, the l.s.c.~and the coercivity~follow from~\cite[Theorem 19-20]{van2014renyi}.
\end{remark}

\subsection{Variational formulation with regularization}\label{sec:regularize}
We now turn our attention to the regularized problem~\eqref{eqn:regularize}, where the regularizer $\mathsf{R}$ is added to promote certain properties of the reconstructed solution $\rho_u^\delta$ while taming the instability in the reconstruction.

Just as in the deterministic setting where different pairs of $(D,\sfR)$ enhance different properties of the reconstructed solution, we expect different designs of $\sfR$, when paired with various of $D$, to promote special properties of $\rho_u^\delta$ as well. Considering all such possible pairings is a vastly diverse topic. Here we confine ourselves to two cases:
\begin{itemize}
    \item Entropy-Entropy pair: we assume $D$ and $\sfR$ take on the form of relative entropy;
    \item $\wass_2$-$\wass_2$ pair: we assume both $D$ and $\sfR$ take the form of the Wasserstein distance.
\end{itemize}
We leave the examination of other possible $(D,\sfR)$ pairs to future work. 

\smallskip

\noindent{\bf Case 1: Entropy-Entropy pair.} Set $D= \KL$ and $
\sfR(\rho_u)=\KL(\rho_u||\mathcal{M})$, with $\calM \in \calP(\calD)$ being a desired output measure for which $\frac{\rd \rho_u}{\rd \calM}$ exists. For the rest of this analysis we assume that all probability distributions are absolutely continuous with respect the the Lebesque measure on the corresponding spaces, and we use the same notation to refer to the distribution and its corresponding density interchangeably. Then~\eqref{eqn:regularize} becomes:
\begin{equation}\label{eq:regularized_KL}
\rho_u^\delta=\argmin_{\rho_u \in \mathcal P_{2,\text{ac}}}\; \KL(\mathcal{G}\#\rho_u|| \rho_y^\delta ) + \alpha  \int  \log \frac{\rho_u}{
\mathcal M}\, \rho_u \rd u =: \mathcal L(\rho_u)\,.
\end{equation}
Under these assumptions we have the following theorem.
\begin{theorem} \label{reg-KL}
Assume $\calG$ is invertible.    The optimal solution to \eqref{eq:regularized_KL} is 
\begin{equation} \label{opt-inv}
\rho_u^\delta \propto [(\mathcal G^{-1}\# \rho_y^\delta) \mathcal M^\alpha ]^{\frac{1}{1+\alpha}}  \,.
\end{equation}
Let  $\rho_u^*=\calG^{-1}\# \rho_y^*$ be the ground truth. Then we have the following error estimate: 
\begin{equation*}
    \KL(\rho_u^* || \rho_u^\delta) = \frac{1}{1+\alpha} \KL(\rho_y^* || \rho_y^\delta) + \frac{\alpha}{1+\alpha} \KL(\rho_y^* || \mathcal G\# \mathcal M) - \log C\,,
\end{equation*}
where $C$ is 
\begin{equation} \label{C-norm}
    C = \left( \int [(\mathcal G^{-1}\# \rho_y^\delta) \mathcal M^\alpha ]^{\frac{1}{1+\alpha}}  \rd u\right)^{-1}\xrightarrow[]{\alpha \rightarrow 0} 1\,.
\end{equation}
\end{theorem}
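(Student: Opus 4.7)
The plan is to exploit two observations: invariance of $\KL$-divergence under the bijection $\calG$, and an algebraic identity that rewrites a nonnegatively weighted sum of two $\KL$-divergences as a single $\KL$ against a geometric mixture, plus an additive constant. Both reductions are purely algebraic, so the proof will not require new analytic machinery beyond what is already in the excerpt.

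First, since $\calG$ is invertible, the change-of-variables formula gives $\KL(\calG\#\rho_u \,||\, \rho_y^\delta) = \KL(\rho_u \,||\, \calG^{-1}\#\rho_y^\delta)$. Setting $\nu := \calG^{-1}\#\rho_y^\delta$ and expanding the logarithms, a direct calculation yields
\begin{equation*}
\mathcal{L}(\rho_u) = (1+\alpha) \int \rho_u \log \frac{\rho_u}{(\nu\calM^\alpha)^{1/(1+\alpha)}} \,\rd u.
\end{equation*}
With $Z := \int (\nu\calM^\alpha)^{1/(1+\alpha)} \,\rd u$ and $p := Z^{-1} (\nu\calM^\alpha)^{1/(1+\alpha)}$, the objective becomes $(1+\alpha)\KL(\rho_u \,||\, p) - (1+\alpha)\log Z$. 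Nonnegativity of $\KL$, with equality if and only if $\rho_u = p$, immediately yields \eqref{opt-inv} with $C = 1/Z$ and also guarantees uniqueness of the minimizer; this settles the first claim without resorting to Lagrange multipliers.

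For the error estimate, I would substitute the explicit optimizer $\rho_u^\delta = C\,\nu^{1/(1+\alpha)} \calM^{\alpha/(1+\alpha)}$ into $\KL(\rho_u^* \,||\, \rho_u^\delta)$ and obtain
\begin{equation*}
\KL(\rho_u^* \,||\, \rho_u^\delta) = \int \rho_u^* \log \rho_u^* \,\rd u - \log C - \frac{1}{1+\alpha}\int \rho_u^* \log \nu \,\rd u - \frac{\alpha}{1+\alpha}\int \rho_u^* \log \calM \,\rd u.
\end{equation*}
Splitting the entropy term as $\int \rho_u^* \log \rho_u^* = \frac{1}{1+\alpha} \int \rho_u^* \log \rho_u^* + \frac{\alpha}{1+\alpha} \int \rho_u^* \log \rho_u^*$ recombines the remaining integrals into a $\frac{1}{1+\alpha}$-weighted $\KL(\rho_u^* \,||\, \nu)$ and an $\frac{\alpha}{1+\alpha}$-weighted $\KL(\rho_u^* \,||\, \calM)$, plus $-\log C$. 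Applying bijection-invariance of $\KL$ once more converts these into $\KL(\rho_y^* \,||\, \rho_y^\delta)$ and $\KL(\rho_y^* \,||\, \calG\#\calM)$, respectively, which is exactly the identity claimed in the theorem. The limit $C \to 1$ as $\alpha \to 0$ in~\eqref{C-norm} follows from $(\nu\calM^\alpha)^{1/(1+\alpha)} \to \nu$ pointwise, combined with dominated convergence under a mild integrability hypothesis.

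The derivation is almost entirely algebraic; the only conceptual step is to recognize the entropy-regularization structure, which makes the optimizer a normalized geometric mean of the likelihood factor $\nu = \calG^{-1}\#\rho_y^\delta$ and the prior $\calM$. The main bookkeeping task is tracking the normalization $C$ consistently so that each identity is between genuine probability measures, and the only analytic obstacle is the $\alpha\to 0$ limit, where a domination hypothesis is needed to exchange the limit and the integral.
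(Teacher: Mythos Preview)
Your proof is correct and rests on the same two ingredients as the paper's: the bijection-invariance identity $\KL(\calG\#\rho_u\,||\,\rho_y^\delta)=\KL(\rho_u\,||\,\calG^{-1}\#\rho_y^\delta)$, and the substitution of the explicit optimizer into $\KL(\rho_u^*\,||\,\rho_u^\delta)$ for the error estimate. The one place you differ is in how the minimizer is identified. The paper writes down the first-order optimality condition $\frac{\delta\mathcal{L}}{\delta\rho_u}=\text{const}$ (a Lagrange-multiplier argument, justified by convexity of $\KL$ and linearity of the pushforward) and solves it algebraically. You instead collapse the objective directly to $(1+\alpha)\KL(\rho_u\,||\,p)-(1+\alpha)\log Z$ and invoke Gibbs' inequality. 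Your route is slightly more elementary and gives existence, uniqueness, and the minimum value in one stroke, without any variational calculus; the paper's route is the more conventional optimization argument. Both arrive at the same geometric-mean formula, and your treatment of the error identity and the $\alpha\to 0$ limit matches the paper's (the paper in fact does not justify the limit beyond stating it, so your mention of dominated convergence is an addition rather than an omission).
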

\begin{proof}
Since the KL divergence is convex (in the usual sense) and the pushforward action is a linear operator, the optimal solution of \eqref{opt-inv} can be obtained by solving the optimality condition: 
\begin{align*}
   C_0 = \frac{\delta \mathcal L}{\delta \rho_u} \big|_{\rho_u = \rho_u^\delta} 
   & = 1+ \log \frac{\rho_u^\delta}{\calG^{-1} \# \rho_y^\delta} + \alpha \left[ 1+ \log \frac{\rho_u^\delta }{\calM }\right]\,,
\end{align*}
where $C_0$ is any constant and we have used the fact that $$\KL(\calG \# \rho_u || \rho_y^\delta) = \KL( \rho_u || \calG^{-1} \#\rho_y^\delta)$$. Clearly, 
\[
\rho_u^\delta = C [(\mathcal G^{-1}\# \rho_y^\delta) \mathcal M^\alpha ]^{\frac{1}{1+\alpha}}\,,
\]
where $C$ is the normalizing constant \eqref{C-norm}.

Substituting \eqref{opt-inv} into $ \KL (\rho_u^*||\rho_u^\delta)$, we have
\begin{align*}
    \KL (\rho_u^*||\rho_u^\delta) & = \int \rho_u^*(u) \log \frac{\rho_u^*(u)}{\rho_u^\delta(u)}\, \rd u
    \\ & = \int \rho_u^*(u) \left\{ \log \rho_u^*(u) - \frac{1}{1+\alpha} \log [(\calG^{-1} \# \rho_y^\delta )(u)\, \calM(u)^\alpha ]   - \log C\right\} \, \rd u
    \\ 
    & = \frac{1}{1+\alpha} \int \rho_u^* \log \frac{\rho_u^*}{\calG^{-1} \# \rho_y^\delta} \, \rd u + \frac{\alpha}{1+\alpha} \int \rho_u^* (u)\log \frac{\rho_u^*(u)}{\calM (u)}\, \rd u - \log C
    \\ &= \frac{1}{1+\alpha} \KL(\rho_y^* || \rho_y^\delta) + \frac{\alpha}{1+\alpha} \KL\left(\rho_y^* || \mathcal G\# \mathcal M \right)- \log C \,.
\end{align*}
\end{proof}

\noindent{\bf Case 2: $\wass_2$-$\wass_2$ pair.} Here, we set $\sfR[\rho_u]= \int |u|^2 \rd \rho_u(u)$, the second-order moment of $\rho_u$, and $D = \wass_2$. Then~\eqref{eqn:regularize} becomes:%
\begin{equation}\label{eq:regularized2}
\rho_u^\delta=\argmin_{\rho_u \in \mathcal P_2}\; \wass_2^2(\mathcal{G} \# \rho_u, \rho_y^\delta  ) + \alpha^2  \int |u|^2 \rd \rho_u(u)=:E[\rho_u;\rho_y^\delta]\,.
\end{equation}
One nice observation about this regularization is that
\[
\sfR[\rho_u]=\wass_2^2(\rho_u,\delta_0)\,,
\]
and therefore the  whole objective functional can be condensed into one, as shown in the lemma below.
\begin{lemma}\label{lem:was_reg_1}
For any $\rho^\delta_y \in \calP(\bbR^n)$, the cost function defined in~\eqref{eq:regularized2} can be rewritten as:
\begin{align} \label{W2-reg}
 E[\rho_u;\rho_y^\delta]=\wass_2^2(\mathcal G\# \rho_u, \rho^\delta_y) + \alpha^2 \int |u|^2 \rd \rho_u(u) = \wass_2^2(\tilde \calG\# \rho_u, \bar \rho_y) \,,  
\end{align}
with $\bar \rho_y = \rho^\delta_y \otimes \delta_0(y)$ where $\delta_0(y) \in \calP(\mathbb{R}^{n})$ denotes the Dirac delta centered at $0\in \mathbb{R}^{n}$, and $\tilde{\mathcal G} =      \mathcal G  \otimes \alpha \Imat_m$, with $\Imat_m$ being the $m$-dimensional identity. More explicitly ,
\[
\tilde{\calG}(u):\calD\subset\mathbb{R}^m\to\calR\otimes\calD\subset\mathbb{R}^{n+m}\,,\quad\text{with}\quad \tilde{\calG}(u)=(\calG(u),\alpha u)\,.
\]
\end{lemma}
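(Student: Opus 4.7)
The plan is to show both sides of~\eqref{W2-reg} compute the same optimal transport value by matching couplings of $(\rho_u,\rho_y^\delta)$ with couplings of the enriched pair $(\tilde\calG\#\rho_u,\bar\rho_y)$. The structural key is that $\bar\rho_y$ is supported on $\calR\times\{0\}$ while $\tilde\calG\#\rho_u$ is supported on the graph $\{(\calG(u),\alpha u):u\in\mathrm{supp}(\rho_u)\}$; because the map $T(u):=(\calG(u),\alpha u)$ is injective (for $\alpha\neq 0$, the second coordinate alone determines $u$), couplings on the enriched space are in one-to-one correspondence with couplings $\pi\in\Gamma(\rho_u,\rho_y^\delta)$ via $\gamma=(T\times S)\#\pi$, where $S(y):=(y,0)$.

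Using this correspondence, the transport cost on the enriched side decomposes cleanly: for any $\pi\in\Gamma(\rho_u,\rho_y^\delta)$,
\[
\int\|(\calG(u),\alpha u)-(y,0)\|^2\,\rd\pi(u,y) \;=\; \int\|\calG(u)-y\|^2\,\rd\pi(u,y)+\alpha^2\int|u|^2\,\rd\rho_u(u),
\]
since the second summand depends only on the $u$-marginal $\rho_u$. Taking the infimum over $\pi$ then gives
\[
\wass_2^2(\tilde\calG\#\rho_u,\bar\rho_y)=\inf_{\pi\in\Gamma(\rho_u,\rho_y^\delta)}\int\|\calG(u)-y\|^2\,\rd\pi(u,y)+\alpha^2\int|u|^2\,\rd\rho_u(u).
\]

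The main obstacle is reconciling the first term on the right with $\wass_2^2(\calG\#\rho_u,\rho_y^\delta)$, because $\calG$ itself may fail to be injective and so couplings of $(\rho_u,\rho_y^\delta)$ are not in bijection with couplings of $(\calG\#\rho_u,\rho_y^\delta)$. The easy direction (``$\geq$'') is immediate: $(\calG\otimes\mathrm{id})\#\pi\in\Gamma(\calG\#\rho_u,\rho_y^\delta)$ has the same cost as $\pi$. For the reverse, I would invoke measure disintegration (\Cref{thm:disintegration}) to write $\rho_u(\rd u)=\int\rho_u^z(\rd u)\,\rd(\calG\#\rho_u)(z)$ with $\rho_u^z$ supported on $\calG^{-1}(z)$, and then for any $\bar\gamma\in\Gamma(\calG\#\rho_u,\rho_y^\delta)$ glue to obtain $\pi(\rd u,\rd y):=\int\rho_u^z(\rd u)\,\rd\bar\gamma(z,y)$. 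A direct check verifies the marginals, and since $\calG(u)=z$ on the support of $\rho_u^z$, the cost of $\pi$ equals $\int\|z-y\|^2\,\rd\bar\gamma$; choosing $\bar\gamma$ optimal closes the identity and completes the proof of~\eqref{W2-reg}.
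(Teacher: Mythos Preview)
Your proof is correct and the underlying ideas match the paper's: both arguments ultimately show that
\[
\wass_2^2(\tilde\calG\#\rho_u,\bar\rho_y)=\inf_{\pi\in\Gamma(\rho_u,\rho_y^\delta)}\int\|\calG(u)-y\|^2\,\rd\pi(u,y)+\alpha^2\int|u|^2\,\rd\rho_u
\]
and that the infimum on the right equals $\wass_2^2(\calG\#\rho_u,\rho_y^\delta)$ via a lifting of couplings through $\calG$. The organization differs slightly. You first exploit the injectivity of $\tilde\calG$ (for $\alpha\neq 0$) to set up a clean bijection between $\Gamma(\tilde\calG\#\rho_u,\bar\rho_y)$ and $\Gamma(\rho_u,\rho_y^\delta)$, and then isolate the identity $\inf_{\pi}\int\|\calG(u)-y\|^2\,\rd\pi=\wass_2^2(\calG\#\rho_u,\rho_y^\delta)$ as a separate step proved by disintegration/gluing. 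The paper instead establishes the two inequalities directly: it starts from an optimal plan $\pi_1\in\Gamma(\calG\#\rho_u,\rho_y^\delta)$, asserts (without explicitly invoking \Cref{thm:disintegration}) that it lifts to some $\hat\pi_1\in\Gamma(\rho_u,\rho_y^\delta)$, builds the candidate coupling $\pi_3$ on the enriched space, and then argues optimality of $\pi_3$ by a contradiction that unwinds any competitor $\gamma$ back down through $\tilde\calG$. Your version is more modular and makes the role of injectivity explicit; the paper's is more constructive but leaves the lifting step informal. The content is equivalent.
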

\begin{proof}
We drop sub-index $m$ in the proof because there is no ambiguity. Let $\pi_1$ be the optimal transport plan between $\calG\# \rho_u$ and $\rho^\delta_y$. Then
\[
\wass_2^2(\mathcal G\# \rho_u, \rho^\delta_y)  = \int |y' - y|^2 \pi_1 (\rd y' \rd y) = \int |\mathcal G(u) - y|^2 \hat{\pi}_1 (\rd u \rd y),
\]
where ${\pi}_1  = (\calG \times \Imat)\#  \hat {\pi}_1$ for some $\hat{\pi}_1 \in  \Gamma(\rho_u, \rho^\delta_y)$. Note that if $\calG$ is not one-to-one, $\hat{\pi}_1$ may not be unique, but its existence is always guaranteed. Similarly:
\begin{equation}
    \int |u|^2 \rd\rho_u=\int |u-0|^2 \hat{\pi}_2(\rd u \rd u')\,,\quad\text{with}\quad \hat{\pi}_2 = \rho_u \otimes \delta_0(u) \in \Gamma(\rho_u, \delta_0(u))\,,
\end{equation}
where $\delta_0(u) \in \calP(\mathbb{R}^m)$ denotes the Dirac delta at 0. Defining $\hat{\pi}_3  =\hat{\pi}_1\otimes \delta_0(u) \in \Gamma( \rho_u, \rho_y^\delta \otimes \delta_0(u) )$, we rewrite:
\begin{align*}
E[\rho_u;\rho_y^\delta] =  & \int |\mathcal G(u) - y|^2 \hat{\pi}_1 (\rd u \rd y) + \alpha^2 \int |u|^2 \rd\rho_u 
    \\  = & \int |\tilde{\mathcal G} (u) - {\bf y}'|^2 \hat{\pi}_3(\rd u\, \rd {\bf y}')\quad\text{with} \quad {\bf y}'=(y,0)\\
    = &   \int |{\bf y} -{\bf y}'|^2 {\pi}_3 (\rd {\bf y}  \, \rd{\bf y}' ) \,, \quad {\pi}_3 = (\tilde{\mathcal{G}} \times \Imat)\#  \hat {\pi}_3  \in \Gamma\left( \tilde{\mathcal{G}}\# \rho_u, \, \rho_y^\delta \otimes \delta_0(u) \right)\,.
    \end{align*}
To show this is $\wass^2_2(\tilde{\calG}\#\rho_u,\bar\rho_y)$, we also need to show $\pi_3$ is an optimal plan. Assume $\gamma\neq \pi_3$ and $\gamma$ is the optimal transport plan between $ \tilde{\mathcal{G}}\# \rho_u$ and  $\bar{\rho}_y = \rho_y^\delta\otimes \delta_0(u)$, then we have 
\begin{align*}
    \wass_2^2({\tilde \calG}\# \rho_u, \bar \rho_y) = &  \int |{\bf y} -{\bf y}'|^2  \rd\gamma (\rd {\bf y}  \, \rd{\bf y}' ) \\
    = & \int |\tilde{\mathcal G} (u) - {\bf y}'|^2  \rd\hat{\gamma}(\rd u\, \rd {\bf y}'),\quad \gamma = (\tilde{\mathcal{G}} \times \Imat)\# \hat{\gamma},\,\, \hat{\gamma}\in \Gamma\left(\rho_u, 
       \rho_y^\delta \otimes \delta_0(u) \right) \\
    = & \int |\mathcal G(u) - y|^2  \rd \hat{\gamma}_1 (\rd u \,  \rd y) + \alpha^2 \int |u|^2  \rd \rho_u  \,, \quad \hat{\gamma}_1 \in \Gamma(\rho_u, 
        \rho_y^\delta) \\
    = & \int |y - y'|^2  \rd \hat{\gamma}_2 (\rd y \, \rd y') + \alpha^2 \int |u|^2 \rd \rho_u  \,, \quad  \hat{\gamma}_2 \in \Gamma(\calG\# \rho_u, 
        \rho_y^\delta) \\
    \geq & \,\, \wass_2^2(\mathcal G\# \rho_u, \rho_y^\delta) + \alpha^2 \int |u|^2 \rd \rho_u\,, \\
    = & \int |{\bf y} -{\bf y}'|^2 {\pi}_3 (\rd {\bf y}  \, \rd{\bf y}' ) \,,
\end{align*}
where $ \hat{\gamma}_1$ and $\hat{\gamma}_2$ are determined by $\gamma$. This
contradicts the assumption that $\pi_3$ is not optimal. So we conclude with~\eqref{W2-reg}.
\end{proof}

This lemma holds for generic $\calG$. When $\calG$ is linear, the newly introduced regularizer brings effects that resonate Tikhonov regularization, as stated in the following theorem.

\begin{theorem} \label{reg-W}
Let $\mathcal G = \Amat\in\bbR^{n\times m}$ with $n\geq m$, $\Amat$ has full column rank, and $\Amat^\dagger=(\Amat^\top\Amat)^{-1}\Amat^\top$ as defined in~\eqref{eqn:A_penrose}. Then:
\begin{itemize}
    \item When $\delta=0$, $\alpha=0$ and $\rho_y^*\in\calP_{ac}(\mathbb{R}^n)$ , the minimizer to~\eqref{eq:regularized2} is:
    \begin{equation}\label{eqn:opt_delta_0}
    \rho_u^\ast=\Amat^\dagger\#\rho^*_y  \,,
\end{equation}
\item When $\delta\neq 0$, $\alpha\neq 0$ and $\rho_y^\delta\in\calP_{ac}(\mathbb{R}^n)$, the variational problem~\eqref{eq:regularized2} achieves minimum at 
\begin{align} \label{rhous}
\rho_u^\delta = (\Amat^\top \Amat + \alpha^2 \Imat)^{-1} \Amat^\top\# \rho_y^\delta \,.
\end{align}
\end{itemize}
The reconstruction error against the optimal solution is:
\begin{align} \label{12}
\wass_2(\rho_u^\delta, \rho_u^*) \leq  \norm{(\Amat^\top \Amat + \alpha^2 \Imat)^{-1} \Amat^\top} \wass_2(\rho^*_y, \rho_y^\delta) 
+
\norm{(\Amat^\top \Amat + \alpha^2 \Imat)^{-1} \Amat^\top - \Amat^{\dagger}}_2 \sqrt{{\mathbb{E}_{\rho_y^*} \left[y^2\right]}}\,. 
\end{align}
Furthermore, if $\sigma_m=\sigma_{\min}(\Amat)$ is the smallest singular value for $\Amat$, then \eqref{12} can be further simplified to 
\begin{align}\label{eqn:12_simplify}
\wass_2(\rho_u^\delta, \rho_u^*) 
& \leq \sqrt{\frac{1}{2\alpha}} \wass_2(\rho^*_y, \rho_y^\delta) +  \sqrt{\frac{\alpha^2}{\sigma_m(\sigma_m^2+\alpha^2)}} \sqrt{\mathbb{E}_{\rho^*_y} \left[|y|^2\right]} \nonumber
\\ & \leq \sqrt{\frac{1}{2\alpha}} \wass_2(\rho^*_y, \rho_y^\delta) +  \sqrt{\frac{\alpha}{2\sigma_m^2}} \sqrt{\mathbb{E}_{\rho^*_y} \left[|y|^2\right]} \,.
\end{align}
\end{theorem}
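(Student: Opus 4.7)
The strategy is to use Lemma \ref{lem:was_reg_1} to collapse the regularized objective in \eqref{eq:regularized2} into a single unregularized Wasserstein matching problem, recognize the optimizer as a pseudoinverse pushforward, and control the reconstruction error by combining the triangle inequality with two standard Wasserstein bounds for linear pushforwards. Writing
\[
\tilde\Amat = \begin{pmatrix}\Amat\\ \alpha\Imat_m\end{pmatrix}\in\bbR^{(n+m)\times m},\qquad \bar\rho_y = \rho_y^\delta\otimes\delta_0,
\]
the lemma gives $E[\rho_u;\rho_y^\delta] = \wass_2^2(\tilde\Amat\#\rho_u,\bar\rho_y)$. Because $\Amat$ has full column rank and $\alpha>0$, $\tilde\Amat$ is full column rank, with Moore--Penrose inverse $\tilde\Amat^\dagger = (\Amat^\top\Amat+\alpha^2\Imat_m)^{-1}(\Amat^\top,\,\alpha\Imat_m)$.

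First I would handle part 1 ($\delta=0$, $\alpha=0$). This is the overdetermined analogue of Theorem \ref{thm:under-stability}(1): using the orthogonal splitting $\bbR^n = \mathrm{Ran}(\Amat)\oplus\mathrm{Ran}(\Amat)^\perp$ together with the measure disintegration theorem (\Cref{thm:disintegration}), the transport cost $\wass_2^2(\Amat\#\rho_u,\rho_y^*)$ separates into an in-range contribution and an out-of-range contribution depending only on $\rho_y^*$. The in-range contribution is minimized to zero when $\Amat\#\rho_u$ equals the orthogonal projection of $\rho_y^*$ onto $\mathrm{Ran}(\Amat)$, which is precisely $(\Amat\Amat^\dagger)\#\rho_y^*$; thus $\rho_u^*=\Amat^\dagger\#\rho_y^*$ is a minimizer, establishing \eqref{eqn:opt_delta_0}. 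Part 2 follows from the same minimization applied to $\tilde\Amat$ and $\bar\rho_y$: the minimizer is $\tilde\Amat^\dagger\#\bar\rho_y$, and testing against a bounded continuous $\varphi$ shows that the $\delta_0$ factor in $\bar\rho_y$ collapses the $\alpha\Imat_m$ block, leaving $\rho_u^\delta = [(\Amat^\top\Amat+\alpha^2\Imat)^{-1}\Amat^\top]\#\rho_y^\delta$, i.e.\ \eqref{rhous}.

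For the error bound, set $B:=(\Amat^\top\Amat+\alpha^2\Imat)^{-1}\Amat^\top$, so that $\rho_u^\delta = B\#\rho_y^\delta$ and $\rho_u^* = \Amat^\dagger\#\rho_y^*$. Inserting the auxiliary measure $B\#\rho_y^*$ through the triangle inequality reduces \eqref{12} to two bounds: $\wass_2(B\#\rho_y^\delta,B\#\rho_y^*)\leq\|B\|\,\wass_2(\rho_y^\delta,\rho_y^*)$, obtained by pushing the optimal coupling of $(\rho_y^\delta,\rho_y^*)$ through $B\times B$; and $\wass_2(B\#\rho_y^*,\Amat^\dagger\#\rho_y^*)\leq\|B-\Amat^\dagger\|\sqrt{\mathbb{E}_{\rho_y^*}[|y|^2]}$, obtained by using the deterministic coupling $y\mapsto(By,\Amat^\dagger y)$ with $y\sim\rho_y^*$. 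For the final simplification \eqref{eqn:12_simplify}, the SVD $\Amat=U\Sigma V^\top$ yields singular values of $B$ equal to $\sigma_i/(\sigma_i^2+\alpha^2)$, each bounded by $1/(2\alpha)$ via AM--GM, and singular values of $B-\Amat^\dagger$ equal to $\alpha^2/[\sigma_i(\sigma_i^2+\alpha^2)]$, which is monotone decreasing in $\sigma_i$, so the operator norm is attained at $\sigma_m$.

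The main obstacle will be the rigorous proof of part 1 in the strict overdetermined regime $n>m$, where $\rho_y^*\in\calP_{\mathrm{ac}}(\bbR^n)$ cannot be matched exactly by any $\Amat\#\rho_u$ because $\mathrm{Ran}(\Amat)$ is a proper subspace of positive codimension; carrying out the disintegration and orthogonal-splitting cleanly is slightly delicate but parallels the argument used for \eqref{eq:Wasserstein_stability_inf}. Everything downstream is routine pseudoinverse algebra and SVD bookkeeping.
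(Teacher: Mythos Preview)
Your proposal is correct and follows essentially the same route as the paper: reduce via Lemma~\ref{lem:was_reg_1} to a single overdetermined Wasserstein matching for $\tilde\Amat$, identify the minimizer as the pseudoinverse pushforward, then split the error through the intermediate $B\#\rho_y^*$ and bound the two pieces by the Lipschitz pushforward estimate and the deterministic coupling, finishing with the SVD. The only cosmetic difference is that the paper outsources part~1 and the two Wasserstein bounds to cited results (\cite[Theorem~4.7]{li2023differential}, \cite[Theorem~3.2]{ernst2022wasserstein}, \cite[Theorem~3.1]{baptista2023approximation}), whereas you sketch those arguments directly; the disintegration/orthogonal-splitting you propose for part~1 is exactly the content of the cited result.
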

\begin{proof}
A proof of ~\eqref{eqn:opt_delta_0} was drawn in~\cite[Theorem 4.7]{li2023differential}. To show~\eqref{rhous}, we note that  when $\calG=\Amat$, according to~\Cref{lem:was_reg_1}, the problem~\eqref{eq:regularized2} is equivalent to:
\[
\min_{\rho_u \in \calP_2} \wass_2^2(\tilde \Amat\# \rho_u, \bar \rho_y^\delta) \,, 
\]
where $\bar \rho_y^\delta = \rho_y^\delta \otimes \delta_0(u)$ and $\tilde \Amat= \begin{pmatrix}
      \Amat \\ \alpha \Imat
\end{pmatrix}$ is over-determined. Using~\cite[Theorem 4.7]{li2023differential} again:
\[
\rho_u^\delta =  \tilde\Amat^\dagger\# \bar \rho_y^\delta\,.
\]
The proof of~\eqref{rhous} is complete noticing $\tilde{\Amat}^\top \# \bar \rho_y^\delta = \Amat^\top \# \rho_y^\delta$.

To show~\eqref{12}, we leverage the classical analysis for Tikhonov regularization by introducing a third term:
\begin{equation}\label{eqn:middle_term}
\tilde \rho_u=(\Amat^\top \Amat + \alpha^2 \Imat)^{-1} \Amat^\top\# \rho_y^*\,.
\end{equation}
By the triangle inequality, we have 
\begin{equation*}
\wass_2(\rho_u^\delta, \rho_u^*) \leq
\wass_2(\rho_u^\delta, \tilde \rho_u) + \wass_2(\tilde \rho_u, \rho_u^*)\,.
\end{equation*}
The first term can be estimated using the continuity of the map $(\Amat^\top \Amat + \alpha^2 \Imat)^{-1} \Amat^\top$ and comparing~\eqref{eqn:middle_term} with~\eqref{rhous} by citing~\cite[Theorem 3.2]{ernst2022wasserstein}. The second term is estimated using \cite[Theorem 3.1]{baptista2023approximation}:
    \begin{eqnarray*}
        \wass^2_2(\tilde \rho_u, \rho_u^*) &=& \wass^2_2((\Amat^\top \Amat + \alpha^2 \Imat)^{-1} \Amat^\top\# \rho^*_y, \Amat^{\dagger}\# \rho^*_y) \\
        &\leq & \int \left| \left(\Amat^\top \Amat + \alpha^2 \Imat\right)^{-1} \Amat^\top y -\Amat^{\dagger} y \right|^2 \rd \rho^*_y \\
        &\leq & C_{\rho^*_y}\left\|\left(\Amat^\top \Amat + \alpha^2 \Imat\right)^{-1} \Amat^\top -\Amat^{\dagger}\right\|^2_2 \,
    \end{eqnarray*}
where $C_{\rho^*_y} = {\mathbb{E}_{\rho_y^*} \left[|y|^2\right]}$ is the second moment of $\rho_y^*$. To go from~\eqref{12} to~\eqref{eqn:12_simplify}, one simply uses the singular value decomposition of $\Amat$.
\end{proof}
\begin{remark}
Note that the two terms in \eqref{eqn:12_simplify} resemble the two sources of errors: the former being the noise in the measurement, and the latter coming from the regularization. Equating these two contributions leads to the optimal choice of $\alpha$:
    \[
    \alpha = \frac{\sigma_m \wass_2(\rho^*_y, \rho_y^\delta)}{\sqrt{{\mathbb{E}_{\rho_y^*} \left[|y|^2\right]}}}=\sigma_m \frac{\wass_2(\rho^*_y, \rho_y^\delta)}{\wass_2(\rho_y^*,\delta_0)}\,.
    \]
\end{remark}

\section{Problem III: gradient flow}\label{sec:problemIII}
While the existence of a minimizer for the variational problem~\eqref{eqn:inf_SIP}, as discussed in \Cref{sec:problemII}, is crucial, it provides limited practical insight into solving the problem. Therefore, in this section, we focus on Problem III and examine the gradient flow formulation~\eqref{eqn:gradient_sto} as a method for solving~\eqref{eqn:inf_SIP}. Specifically, we concentrate on Wasserstein gradient flows, investigating their convergence properties and the necessary conditions  for the energy $E$. In this context,~\eqref{eqn:gradient_sto} takes the form:
\begin{equation}\label{eqn:gradient_w2}
\partial_t \rho_u = \nabla \cdot \left(\rho_u \nabla \frac{\delta E}{\delta \rho_u}\right)\,, \quad \text{with} \quad E[\rho_u; \rho^\delta_y] := D(\mathcal{G}\#\rho_u, \rho^\delta_y)\,.
\end{equation}

Since gradient information is utilized, we must at least assume differentiability of $E$ on the feasible set. To avoid unnecessary complications, throughout this section, we work exclusively for $\rho_u \in \mathcal{P}_{\ac}(\calD)$. We further assume $\rho_y^\delta \in  \mathcal{P}_{\ac}(\bbR^n)$, and that $E$ is smooth.

\subsection{Characterizations of the equilibrium}\label{subsec:equilibrium}
In this subsection, we characterize some properties of the gradient flow equilibrium.

The form of the equilibrium is highly dependent on the choice of $D$. First, we  examine the gradient flow when $D$
in~\eqref{eqn:gradient_w2} is an $f$-divergence as defined in~\eqref{eq:f-divergence}, and $\mathcal{G}$ is a general nonlinear map. We then constrain our analysis to the setting where $\mathcal{G} = \Amat$ is linear.

When $D$ is an $f$-divergence with $f$  strongly convex, the gradient flow~\eqref{eqn:gradient_sto} becomes:
\begin{equation}\label{eqn:gradient_w}
\partial_t \rho_u = \nabla \cdot \left( \rho_u \nabla_u f'\left(\frac{\rho_y}{\rho_y^\delta}(\mathcal{G}(u)) \right) \right)\,, \quad \text{with} \quad \rho_y = \mathcal{G} \# \rho_u\,.
\end{equation}
When $D$ is chosen as the KL divergence, we can further deduce, following~\cite{li2023differential}, the evolution equation for $\rho_y$:
\begin{equation}\label{eq:full_y_GF}
\partial_t \rho_y = \nabla_y \cdot \left(\rho_y \, B(y) \, \nabla_y \log\left(\frac{\rho_y}{\rho_y^\delta}\right)\right)\,, \quad y \in \mathcal{R}\,,
\end{equation}
where $B(y) = C(\mathcal{G}^{-1}(y))$ and $C(u) = \left.\nabla_u \mathcal{G}\right|_{u} \, \cdot\left.\nabla_u \mathcal{G}\right|^\top_{u}$.

It is standard practice to show that the optimizer is an equilibrium, meaning that the right hand side of~\eqref{eqn:gradient_w} vanishes at the optimizer. Consider the constrained optimization problem, $\min E(\rho_u)$ within the set $\{\rho_u: \int \rho_u \,\mathrm{d}u = 1\}$, and let $\lambda$ be the Lagrange multiplier. The Lagrangian is given by:
\[
L = E(\rho_u) + \lambda\left(\int \rho_u \,\mathrm{d}u  - 1\right)\,.
\]
The optimizer satisfies the first-order optimality condition for $L$, so by taking the derivative with respect to $\rho_u$, we obtain:
\[
f'\left(\frac{\rho_y^\opt}{\rho_y^\delta} \left(\mathcal{G}(u)\right)\right) + \lambda = 0 \quad \Longrightarrow \quad \nabla_u \, f'\left(\frac{\rho_y^\opt}{\rho_y^\delta} \left(\mathcal{G}(u)\right)\right) = 0\,,
\]
where we used $\frac{\delta E}{\delta \rho_u}(u) = \frac{\delta D}{\delta \rho_y} \circ \mathcal{G}(u)$ and denoted $\rho_y^\opt = \mathcal{G} \# \rho_u^\opt$.

However, not all equilibrium states are optimizers. They are simply states where the gradient flow PDE ceases to evolve. These states could be saddle points or local maxima. Nevertheless, we characterize their features below.

\begin{proposition} \label{prop1}
Let $D$ in~\eqref{eqn:inf_SIP} be the $f$-divergence defined in~\eqref{eq:f-divergence} in which the scalar-valued function $f$ is twice differentiable and strictly convex.
Let $\rho_u^\eq$ be an equilibrium of the Wasserstein gradient flow of $E(\rho_u)$. Then, denoting $\rho_y^\eq = \mathcal G\# \rho_u^\eq$, we have:
\begin{align} \label{eq}
 \frac{\rho_y^\eq}{\rho_y^\delta} (\mathcal G (u)) = C\quad \text{on simply connected subsets of }~\text{supp}(\rho_u^\eq)\,.
\end{align}
Here, $C$ can vary on different disjoint subsets of the support. Furthermore, suppose $\text{supp}(\rho_u^\eq) = \calD$ and is one simply connected set:
\begin{itemize}
     \item If $\text{supp}(\rho_y^\delta) = \calR$, then we have  $\rho_y^\delta=\rho_y^\eq$.
     \item If $ \calR \subseteq \text{supp}(\rho_y^\delta)$, then $\rho_y^\eq$ recovers the conditional distribution of $\rho_y^\delta$ on $\calR$, and thus is an optimal solution. 
\end{itemize}
\end{proposition}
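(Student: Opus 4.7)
The plan is to show that at equilibrium the velocity field driving \eqref{eqn:gradient_w} must vanish on the support of $\rho_u^{\eq}$, and then to leverage the strict convexity of $f$ to convert this into a statement about $\rho_y^{\eq}/\rho_y^\delta$ being constant along $\mathcal{G}(u)$. First, I would write the equilibrium condition as
\begin{equation*}
\nabla \cdot \left( \rho_u^{\eq}\, \nabla_u \phi(\mathcal{G}(u)) \right) = 0,\qquad \phi(y):= f'\!\left(\tfrac{\rho_y^{\eq}(y)}{\rho_y^\delta(y)}\right).
\end{equation*}
Multiplying by $\phi(\mathcal{G}(u))$ and integrating by parts (assuming sufficient decay/boundary regularity, which is standard in this setting) yields
\begin{equation*}
\int_{\calD} \rho_u^{\eq}(u)\, \bigl|\nabla_u \phi(\mathcal{G}(u))\bigr|^2\, \rd u = 0,
\end{equation*}
so $\nabla_u\bigl[\phi(\mathcal{G}(u))\bigr]=0$ almost everywhere on $\supp(\rho_u^{\eq})$.

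On any simply connected open subset of $\supp(\rho_u^{\eq})$, a function with vanishing gradient is constant, so $\phi(\mathcal{G}(u)) \equiv \phi_0$ there. Strict convexity of $f$ makes $f'$ strictly monotone and therefore injective, so $\phi_0 = f'(C)$ forces $\rho_y^{\eq}(\mathcal{G}(u))/\rho_y^\delta(\mathcal{G}(u)) = C$ on that component, which is \eqref{eq}. The constant $C$ can differ between disconnected components of $\supp(\rho_u^{\eq})$ because $\phi(\mathcal{G}(u))$ may jump across them without violating the vanishing-gradient condition.

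For the second part, assume $\supp(\rho_u^{\eq}) = \calD$ is simply connected, so $\rho_y^{\eq}(y) = C\,\rho_y^\delta(y)$ for every $y \in \calR = \mathcal{G}(\calD)$. In the case $\supp(\rho_y^\delta) = \calR$, both $\rho_y^{\eq}$ and $\rho_y^\delta$ are probability densities on the same set $\calR$, so integrating the identity over $\calR$ gives $C=1$ and hence $\rho_y^{\eq}=\rho_y^\delta$. In the case $\calR \subsetneq \supp(\rho_y^\delta)$, the pushforward $\rho_y^{\eq} = \mathcal{G}\#\rho_u^{\eq}$ is still supported inside $\calR$, and on that set $\rho_y^{\eq}(y) = C\,\rho_y^\delta(y)$ with $C = \bigl(\int_{\calR} \rho_y^\delta(y)\, \rd y\bigr)^{-1}$ by normalization; this is exactly $\rho_y^\delta$ conditioned on the event $\{y \in \calR\}$, which is the best one can do since the optimization is constrained to distributions of the form $\mathcal{G}\#\rho_u$ with $\supp(\mathcal{G}\#\rho_u) \subseteq \calR$.

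The main obstacle is the integration-by-parts step: strictly speaking it requires either enough regularity/decay of $\rho_u^{\eq}$ and of $\phi \circ \mathcal{G}$, or that $\supp(\rho_u^{\eq})$ has reasonable boundary so the boundary term vanishes because $\rho_u^{\eq}$ does there. A secondary subtlety is that $\nabla_u[\phi(\mathcal{G}(u))] = (\nabla_u \mathcal{G})^\top \nabla_y \phi(\mathcal{G}(u))$, so vanishing of this expression does not a priori force $\nabla_y\phi$ to vanish at $\mathcal{G}(u)$; fortunately the statement only needs $\phi(\mathcal{G}(u))$ itself to be locally constant in $u$, which follows immediately, and the simply-connectedness hypothesis is precisely what is needed to promote local constancy to global constancy on each component.
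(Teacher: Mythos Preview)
Your proposal is correct and follows essentially the same approach as the paper: multiply the equilibrium equation by $f'\bigl(\rho_y^{\eq}/\rho_y^\delta\circ\mathcal G\bigr)$, integrate by parts to obtain $\int \rho_u^{\eq}\,|\nabla_u f'(\cdot)|^2\,\rd u=0$, and then use strict convexity of $f$ to conclude that the ratio is locally constant on $\supp(\rho_u^{\eq})$. The only cosmetic difference is that the paper invokes the chain rule with $f''>0$ to pass from $\nabla_u f'(\cdot)=0$ to $\nabla_u(\text{ratio}\circ\mathcal G)=0$, whereas you argue directly that $\phi\circ\mathcal G$ is constant and then use injectivity of $f'$; these are equivalent, and your discussion of the integration-by-parts and Jacobian subtleties is a reasonable acknowledgment of the standard regularity assumptions the paper leaves implicit.
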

\begin{proof}
The equilibrium state is attained if and only if PDE stops evolving, i.e., $\partial_t \rho_u^\infty = 0$. Replacing $\rho_u$  by $\rho_u^\infty$, 
multiplying~\eqref{eqn:gradient_w} by $f' \left(\frac{\rho_y^\infty}{\rho_y^\delta} \left(\calG (u) \right)  \right)$ on both sides and integrating against the $u$ variable, we obtain
\[
\int\rho^\infty_u\left|\nabla_uf' \left(\frac{\rho_y^\infty}{\rho_y^\delta} \left(\calG (u) \right)  \right)\right|^2 \rd u=0\,.
\]
The integrand is nonnegative, so either $\rho^\infty_u = 0$, or when $\rho_u^\infty\neq 0$, the velocity field becomes zero, i.e.,
\[
\nabla_u \, f' \left(\frac{\rho_y^\infty}{\rho_y^\delta} \left(\calG (u) \right)  \right) = 0 \quad\text{on}~\text{supp}(\rho_u^\infty)\,.
\]
Using the chain rule:
$$
\nabla_u \, f' \left(\frac{\rho_y^\infty}{\rho_y^\delta} \left(\calG (u) \right)  \right)  = f^{''}\left(\frac{\rho_y^\infty}{\rho_y^\delta}(\calG (u) )\right) \nabla_{u} \left( \frac{\rho_y^\infty}{\rho_y^\delta}(\calG (u)) \right) = 0 \,.
$$ 
Since $f^{''} > 0$, we have
$$
\nabla_{u} \left( \frac{\rho_y^\infty}{\rho_y^\delta}(\calG(u)) \right) = 0\implies \frac{\rho_y^\infty}{\rho_y^\delta}(\calG(u)) = C\quad  \text{on supp}(\rho_u^\infty)\,.
$$
Note the constant $C$ can vary when changing from one simply connected subset to another.

When $\text{supp}(\rho_u^\infty)=\calD$, given $\rho_y^\eq$ is the push-forward measure of $\rho_u^\eq$ under the map $\mathcal G$, we know $\text{supp}(\rho_y^\infty)=\calR$. When $\calD$ is a simply connected set, $C$ is fixed across the domain, making $\rho_y^\infty$ either recovering $\rho_y^\delta$ or its conditional distribution on $\calR$.
\end{proof}

It is important to emphasize the differences between equilibrium states of gradient flows based on different objective functionals. Assuming $\mathcal{G} = \Amat$ is linear and overdetermined, we have the following:
\begin{theorem}\label{thm:equilibrium_form}
When $\calG=\Amat$ is overdetermined and the domain $\calD = \bbR^m$, the equilibrium states for~\eqref{eqn:gradient_w2} show different features depending on the choice of $D$:
\begin{itemize}
    \item Setting $D$ as $\wass_2$, assume $\text{supp}\left(\rho_y^\delta\right)$ is a bounded connected open set, then $\rho_y^\eq$ recovers the \textbf{marginal distribution} of $\rho_y^\delta$ on $\Col(\Amat)$, the column space of $\Amat$. 
    \item Setting $D$ as the $f$-divergence, assume $\rho_u^\eq$ has full support over the simply connected domain $\calD$, then  $\rho_y^\eq$ recovers the \textbf{conditional distribution} of $\rho_y^\delta$ on $\Col(\Amat)$.
\end{itemize}
\end{theorem}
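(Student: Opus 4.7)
The plan is to handle the two bullet points separately, as they require different machinery. The $f$-divergence case drops out almost immediately from Proposition~\ref{prop1}, while the $\wass_2$ case requires a new analysis of the first variation of the Wasserstein loss.

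For the $f$-divergence setting, since $\calD = \bbR^m$ is simply connected and $\rho_u^{\eq}$ is assumed to have full support on $\calD$, the support of $\rho_y^{\eq} = \Amat\#\rho_u^{\eq}$ is $\Col(\Amat)$, which is also simply connected. Proposition~\ref{prop1} then asserts that $\rho_y^{\eq}/\rho_y^\delta$ equals a single constant $C$ on $\Col(\Amat)$. Requiring $\rho_y^{\eq}$ to be a probability measure supported on $\Col(\Amat)$ fixes $C$ so that $\rho_y^{\eq}$ is the normalized restriction of $\rho_y^\delta$ to $\Col(\Amat)$, which is exactly the conditional distribution asserted.

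For the $\wass_2$ setting, I first compute the first variation of $E[\rho_u] = \wass_2^2(\Amat\#\rho_u, \rho_y^\delta)$. Using the chain rule together with the standard identity $\tfrac{\delta}{\delta \rho_y}\wass_2^2(\rho_y, \rho_y^\delta) = 2\varphi$, where $\varphi$ denotes a Kantorovich potential associated with $\rho_y$, this yields $\tfrac{\delta E}{\delta \rho_u}(u) = 2\varphi(\Amat u)$. The equilibrium condition $\nabla_u \tfrac{\delta E}{\delta \rho_u} = 0$ on $\supp(\rho_u^{\eq})$ then reads $\Amat^\top \nabla \varphi(y) = 0$ for $\rho_y^{\eq}$-a.e.\ $y \in \Col(\Amat)$, equivalently $\nabla \varphi(y) \in \Col(\Amat)^\perp$. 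On the support of the optimal coupling $\gamma \in \Gamma(\rho_y^{\eq}, \rho_y^\delta)$, Brenier-type analysis gives $y - y' = \tfrac{1}{2}\nabla \varphi(y)$, so the displacement $y - y'$ lies in $\Col(\Amat)^\perp$ for $\gamma$-a.e.\ $(y, y')$. Since $y \in \Col(\Amat)$, projecting via $P := P_{\Col(\Amat)}$ then gives $y = P(y')$ on $\supp(\gamma)$, so $\gamma = (P, \mathrm{id})\#\rho_y^\delta$, and hence $\rho_y^{\eq} = P\#\rho_y^\delta$, which is exactly the marginal of $\rho_y^\delta$ on $\Col(\Amat)$.

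The main technical obstacle is that $\rho_y^{\eq}$ is concentrated on the lower-dimensional subspace $\Col(\Amat) \subset \bbR^n$, so that the $\rho_y^{\eq}$-a.e.\ differentiability of $\varphi$ and the identity $\nabla \varphi(y) = 2(y - y')$ on $\supp(\gamma)$ must be justified with care; the bounded, connected, open support assumption on $\rho_y^\delta$ together with its absolute continuity are precisely what provides the dual-problem regularity needed for these facts. As a potential-free alternative, one can exploit the convexity of $\rho_u \mapsto \wass_2^2(\Amat\#\rho_u, \rho_y^\delta)$ together with the elementary inequality $\wass_2^2(\rho_y, \rho_y^\delta) \geq \int |y' - P(y')|^2 \, d\rho_y^\delta(y')$, valid for every $\rho_y$ supported on $\Col(\Amat)$ with equality iff $\rho_y = P\#\rho_y^\delta$, and combine it with geodesic convexity of $E$ to conclude that every gradient-flow equilibrium satisfies $\Amat\#\rho_u^{\eq} = P\#\rho_y^\delta$.
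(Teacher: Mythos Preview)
Your treatment of the $f$-divergence bullet is exactly the paper's: a direct invocation of Proposition~\ref{prop1} using that $\calD=\bbR^m$ is simply connected and $\rho_u^\eq$ has full support.

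For the $\wass_2$ bullet the paper does not argue at all---it simply cites \cite[Theorem~4.7]{li2023differential}---so your Kantorovich-potential computation is a self-contained reconstruction of what that external reference presumably contains. The line of reasoning you sketch (first variation $2\varphi\circ\Amat$, equilibrium forces $\Amat^\top\nabla\varphi=0$, hence the optimal displacement lies in $\Col(\Amat)^\perp$, hence $\rho_y^\eq=P\#\rho_y^\delta$) is sound in spirit, and you correctly flag the genuine subtlety: $\rho_y^\eq$ is singular in $\bbR^n$, so Brenier must be run from the absolutely continuous side $\rho_y^\delta$, and the $\rho_y^\eq$-a.e.\ differentiability of $\varphi$ together with the relation $y-y'=\tfrac12\nabla\varphi(y)$ on $\supp(\gamma)$ indeed relies on the bounded connected open support assumption. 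Your potential-free alternative is a clean way to identify the \emph{minimizer} as $P\#\rho_y^\delta$ via the orthogonal decomposition $|y-y'|^2=|y-Py'|^2+|y'-Py'|^2$, but the bridge ``geodesic convexity of $E$ implies every equilibrium is a minimizer'' is not free: $\rho_u\mapsto\wass_2^2(\Amat\#\rho_u,\rho_y^\delta)$ is not obviously displacement convex on $\calP_2(\bbR^m)$, since $\Amat\#$ need not map $\wass_2$-geodesics in $\calP_2(\bbR^m)$ to $\wass_2$-geodesics in $\calP_2(\bbR^n)$. If you pursue the alternative route, that convexity claim needs its own argument or citation; otherwise your first, potential-based approach is the safer one.
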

\begin{proof}
The first bullet point was proved in~\cite[Theorem 4.7]{li2023differential}. The second bullet point is a direct corollary of~\Cref{prop1}, now that $\rho_u^\eq$ has full support over the domain $\calD=\bbR^m$.
\end{proof}

We highlight the difference between these two types of equilibrium distributions in Figure~\ref{fig:compare}. This contrast is alarming and suggests the use of caution in making the choice of objective functional when solving stochastic inverse problems.

\begin{figure}
    \centering
    \subfloat[$\wass_2$ gradient  flow of KL]{\includegraphics[width = 0.5\textwidth]{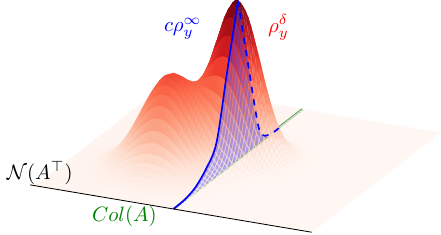}}
     \subfloat[$\wass_2$ gradient  flow of $\wass_2^2$]{\includegraphics[width = 0.5\textwidth]{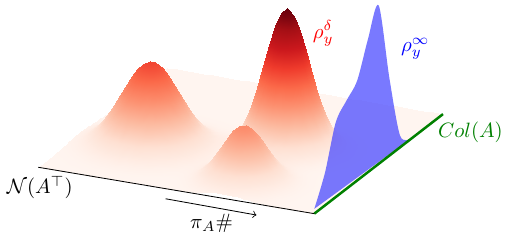}}
    \caption{In the over-determined case, the $\wass_2$ gradient flow of $\KL$ divergence and the squared $\wass_2$ metric between $\Amat \# \rho_u$ and $\rho_y^\delta$ have two different steady states $\rho_y^{\infty}$. The $\KL$ divergence recovers the \textit{conditional} distribution of $\rho_y^\delta$ on $\Col(\Amat)$ while the squared $\wass_2$ metric recovers the \textit{marginal} distribution of $\rho_y^\delta$ on $\Col(\Amat)$.\label{fig:compare}}
\end{figure}

\subsection{Exponential convergence}
While Section~\ref{subsec:equilibrium} explored properties of the flow equilibrium, it does not guarantee that this equilibrium can be achieved starting from a general initial distribution. In this section, we take $D$ to be the KL divergence and characterize the convergence behavior of the evolution equation over time. Assuming that the data distribution is log-concave, Theorem~\ref{thm:convergence} addresses the case for all linear push-forward maps. Furthermore, Corollary~\ref{cor:convergence} demonstrates that exponential convergence occurs for nonlinear push-forward maps $\calG$ with full-rank Jacobians.

\begin{theorem}\label{thm:convergence}
    Assume $\calG = \Amat$ is linear and $D$ in~\eqref{eqn:inf_SIP} is the KL-divergence. If the reference data distribution $\rho_y^\delta$ is $\lambda$-log-concave, i.e., $- \nabla^2 \log\rho_y^\delta\succeq \lambda \,  \text{Id}$ with $\lambda > 0$ and $ \KL(\rho_y(0) || \rho_{y_\Amat}^\delta) < \infty$, where $\rho_y(0) = A\#\rho_u(0)$, and $\rho_u(0)$ is the initial condition of the gradient flow \eqref{eqn:gradient_w2}, then $\rho_y = \Amat \# \rho_u$ converges to  the conditional distribution of $\rho_y^\delta$ on $\Col(\Amat)$, denoted by $\rho_{y_\Amat}^\delta$ (when $\Amat$ is fully- or under-determined, $\rho_{y_\Amat}^\delta = \rho_y^\delta$), exponentially fast in terms of the $\KL$ divergence:
\begin{equation}\label{eq:expo_contract}
     \KL(\rho_y(t) || \rho_{y_\Amat}^\delta)  \leq \exp\left(-2 \sigma_{\min}^2 \lambda \, t\right)  \KL(\rho_y(0) || \rho_{y_\Amat}^\delta)\,,
\end{equation}
where $\sigma_{\min}$ is the smallest nonzero singular value of $\Amat$. 
\end{theorem}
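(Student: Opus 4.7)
The plan is a standard entropy/entropy-dissipation argument on the $y$-variable evolution, combined with the log-Sobolev inequality (LSI) implied by the log-concavity of $\rho_y^\delta$. \textbf{Step 1 (closed evolution for $\rho_y$).} Since $\calG=\Amat$ is linear, $\nabla_u\calG=\Amat$ is constant, so $C(u)=\Amat\Amat^\top$ and the equation~\eqref{eq:full_y_GF} simplifies to
\[
\partial_t\rho_y \;=\; \nabla_y\!\cdot\!\left(\rho_y\,\Amat\Amat^\top\,\nabla_y\log\frac{\rho_y}{\rho_{y_\Amat}^\delta}\right),
\]
read intrinsically on the affine subspace $\Col(\Amat)$. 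In the fully/under-determined case $\Col(\Amat)=\bbR^n$ and $\rho_{y_\Amat}^\delta=\rho_y^\delta$, so nothing changes. In the overdetermined case, $\rho_y(t)$ is supported on $\Col(\Amat)$, and I would check that replacing $\rho_y^\delta$ by its conditional $\rho_{y_\Amat}^\delta$ in the drift changes $\log(\rho_y/\rho_y^\delta)$ only by a term constant along $\Col(\Amat)$, which is killed by the tangential gradient.

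\textbf{Step 2 (entropy dissipation).} Differentiating $\KL(\rho_y(t)\,\|\,\rho_{y_\Amat}^\delta)$ in time and integrating by parts against the flux yields
\[
\frac{d}{dt}\KL\!\left(\rho_y(t)\,\|\,\rho_{y_\Amat}^\delta\right) \;=\; -\int \rho_y\,\left\langle\nabla_y\log\frac{\rho_y}{\rho_{y_\Amat}^\delta},\;\Amat\Amat^\top\,\nabla_y\log\frac{\rho_y}{\rho_{y_\Amat}^\delta}\right\rangle dy.
\]
The gradient $\nabla_y\log(\rho_y/\rho_{y_\Amat}^\delta)$ can be taken to lie in $\Col(\Amat)$ (its component normal to the subspace is irrelevant), and on this subspace $\Amat\Amat^\top\succeq\sigma_{\min}^2\,\Imat$. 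Consequently
\[
\frac{d}{dt}\KL\!\left(\rho_y(t)\,\|\,\rho_{y_\Amat}^\delta\right) \;\leq\; -\sigma_{\min}^2\,I\!\left(\rho_y\,\|\,\rho_{y_\Amat}^\delta\right),
\]
where $I(\,\cdot\,\|\,\cdot\,)$ is the relative Fisher information computed intrinsically on $\Col(\Amat)$.

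\textbf{Step 3 (LSI and Gr\"onwall).} The conditional of a $\lambda$-log-concave density on an affine subspace is again $\lambda$-log-concave: writing $\rho_y^\delta\propto e^{-V}$ with $\nabla^2 V\succeq\lambda\,\Imat$, the potential of $\rho_{y_\Amat}^\delta$ is the restriction of $V$ to the slice, whose Hessian in the tangent directions still satisfies the same bound. The Bakry--\'Emery criterion then gives $\rho_{y_\Amat}^\delta$ the log-Sobolev inequality $\KL(\rho\,\|\,\rho_{y_\Amat}^\delta)\leq \frac{1}{2\lambda}\,I(\rho\,\|\,\rho_{y_\Amat}^\delta)$. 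Plugging this into the dissipation bound yields $\frac{d}{dt}\KL\leq -2\sigma_{\min}^2\lambda\,\KL$, and Gr\"onwall's lemma produces~\eqref{eq:expo_contract}. The main obstacle I anticipate is precisely the overdetermined case: $\rho_y$ is singular in $\bbR^n$ yet absolutely continuous on $\Col(\Amat)$, so the evolution, the integration by parts, and the LSI must all be interpreted on the lower-dimensional affine subspace, and the passage from $\rho_y^\delta$ to $\rho_{y_\Amat}^\delta$ must be justified so that the relative entropy remains finite. Once that framework is in place, the remaining calculation is a direct transcription of the classical de Bruijn--Bakry--\'Emery argument for Fokker--Planck equations with degenerate diffusion matrix $\Amat\Amat^\top$.
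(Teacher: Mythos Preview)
Your proposal is correct and follows essentially the same entropy--dissipation/Bakry--\'Emery route as the paper. The one thing the paper does that you leave implicit is to resolve the ``work intrinsically on $\Col(\Amat)$'' issue by an explicit SVD change of variables $z=\Vmat^\top y$ (with $\Amat\Amat^\top=\Vmat\Sigma^2\Vmat^\top$), which turns the degenerate $y$-flow into a full-rank Fokker--Planck equation for $\rho_z$ with diffusion $\Sigma^2$; the equality $\KL(\rho_y\|\rho_{y_\Amat}^\delta)=\KL(\rho_z\|\rho_z^\delta)$ and the $\lambda$-log-concavity of $\rho_z^\delta$ then drop out of the isometry, making your Step~3 obstacle disappear.
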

\begin{proof}
Since we consider the case where $\calG$ is linear, we rewrite~\eqref{eq:full_y_GF}:
\begin{equation}\label{eqn:FP_y}
\partial_t\rho_y(t, y) = \nabla_y\cdot\left(\rho_y(t, y) \Amat\Amat^\top \,\nabla_y\log\left(\frac{\rho_y(t, y)}{\rho_y^\delta(t, y)}\right)\right) \,,\quad y\in \text{Col}(\Amat)\,.
\end{equation}

 To ease the notation, we will drop the parenthesis $(t, y)$ and only write out the explicit dependence when necessary. We denote by $\rho_{y_\Amat}^\delta$ the conditional distribution of $\rho_y^\delta$ on $\Col(\Amat)$. Then we have
\begin{equation*}
    \rho_{y_\Amat}^\delta(y) = C \rho_y^\delta(y) \quad \text{for}~~y\in \Col(\Amat)\,,
\end{equation*}
where $C^{-1} = \int_{\Col(\Amat)} \rho_y^\delta (y) \rd y$. As a result, Equation~\eqref{eqn:FP_y} can be re-written as
\begin{equation}\label{eqn:FP_y_2}
\partial_t\rho_y(y) = \nabla_{y} \cdot\left(\rho_y(y) \Amat\Amat^\top \,\nabla_{y}\log\left(\frac{\rho_y(y)}{\rho_{y_\Amat}^\delta(y)}\right)\right) \,,\quad y\in \text{Col}(\Amat)\,.
\end{equation}
We conduct the SVD for $\Amat$ in economy size, denote by $\Vmat$ the column space and by $\Sigma$ the collection singular value matrix ordered accordingly. Using this we have  
$\Amat\Amat^\top = \Vmat\Sigma^2\Vmat^\top$. For all $y\in \Col(\Amat)$, one has the isomorphism of
\[
z = \Vmat^\top y\quad  \Longrightarrow \quad  y  = \Vmat z =  \Vmat\Vmat^\top y \,.
\]
Noting that $\Vmat$ is orthonormal we have $\|\Vmat\|=1$, where $\|\cdot\|$ denotes the operator norm. Moreover, $\rho_z=\Vmat^\top\#\rho_y$, making $\rho_z(\Vmat^\top y)=\rho_y(y)$. Consider the velocity field
\[
\frac{\rd}{\rd t}y = \Amat\Amat^\top \,\nabla_{y}\log\left.\left(\frac{\rho_y}{\rho_{y_\Amat}^\delta}\right)\right|_y\,\Longrightarrow\, \frac{\rd}{\rd t}z =\Sigma^2\Vmat^\top \,\nabla_{y}\log\left.\left(\frac{\rho_y}{\rho_{y_\Amat}^\delta}\right)\right|_y=\Sigma^2\,\nabla_{z}\log\left.\left(\frac{\rho_z}{\rho_{z}^\delta}\right)\right|_{z}
\]
where we used $\rho^\delta_z(\Vmat^\top y)=\rho_{y_\Amat}^\delta(y)$. This implies an induced gradient flow  for $\rho_z$:
\begin{equation}\label{eqn:FP_y_3}
\partial_t\rho_{z} = \nabla_{z} \cdot\left(\rho_{z} \Sigma^2 \,\nabla_{z}\log\left(\frac{\rho_{z}}{\rho_{{z}}^\delta(z)}\right)\right)\,.
\end{equation}

By the data-processing inequality~\eqref{eq:dpi},
\begin{align*}
\KL(\rho_z|| \rho_{z}^\delta) &\geq  \KL(\Vmat\# \rho_z || \Vmat\# \rho_{z}^\delta)\\
&=  \KL(\rho_y || \rho_{y_\Amat}^\delta) \geq \KL(\Vmat^\top \# \rho_y || \Vmat^\top \# \rho_{y_\Amat}^\delta)  =\KL(\rho_z|| \rho_{z}^\delta)\,,     
\end{align*}
which implies that $\KL(\rho_z|| \rho_{z}^\delta)  = \KL(\rho_y || \rho_{y_\Amat}^\delta)$. Therefore,
\begin{equation}\label{eqn:rho_y_exp_1}
\begin{aligned}
\partial_t  \KL(\rho_y(t) || \rho_{y_\Amat}^\delta) &=  \partial_t \KL(\rho_z(t) || \rho_{z}^\delta)= \int \log \left(\frac{\rho_z}{\rho_{z}^\delta}\right) \partial_t \rho_z \rd z \\
    &= - \int \bigg| \Sigma \nabla_{z} \log \left(\frac{\rho_z}{\rho_{z}^\delta}\right) \bigg|^2 \rho_z \rd z\\
    &\leq - \sigma_{\min}^2 \int \bigg|\nabla_{z} \log \left(\frac{\rho_z}{\rho_{z}^\delta}\right) \bigg|^2 \rho_z \rd z 
\end{aligned}
\end{equation}
where $\sigma_{\min}$ is the smallest nonzero singular value of $\Amat$.

Note that $\rho_{y_\Amat}^\delta$ is $\lambda$-log-concave as a result of the assumption on $\rho_y^\delta$. Moreover, 
$$
\Vmat^\top \nabla^2 \log \rho_{y_\Amat}^\delta \big|_{y = \Vmat z} \Vmat =  \nabla^2\log \rho_{z}^\delta \big|_z\,\quad\Longrightarrow\quad \nabla^2\log \rho_{z}^\delta \succeq \lambda \,  \Imat\,,
$$
and hence $\rho_{z}^\delta$ is also $\lambda$-log-concave. According to the Bakry--\'Emery condition~\cite{Bakry1985}:
\begin{equation}\label{eqn:lsi}
 \KL(\rho_z(t) || \rho_{z}^\delta) \leq \frac{1}{2\lambda} \int \bigg|\nabla_{z} \log \left(\frac{\rho_z}{\rho_{z}^\delta}\right) \bigg|^2 \rho_z \rd z\,.
\end{equation}
Plugging~\eqref{eqn:lsi} into~\eqref{eqn:rho_y_exp_1}, we have:
\begin{equation}\label{eqn:rho_y_exp_2}
\begin{aligned}
\partial_t  \KL(\rho_y(t) || \rho_{y_\Amat}^\delta) \leq  - 2 \sigma_{\min}^2 \lambda \, \KL(\rho_z(t) || \rho_{z}^\delta)  = - 2 \sigma_{\min}^2 \lambda \,  \KL(\rho_y(t) || \rho_{y_\Amat}^\delta)  \,.
\end{aligned}
\end{equation}
Exponential convergence is now achieved using Gr\"onwall's inequality~\eqref{eq:expo_contract}.
\end{proof}

\begin{remark}
We have a couple comments regarding this theorem.
\begin{itemize}
    \item Exponential convergence can be achieved as long as the log-Sobolev inequality~\eqref{eqn:lsi} is satisfied. This inequality is a property for $\rho^\delta_z$, our auxiliary distribution, and thus can be hard to check. To obtain this, we impose the convexity condition on $\rho_y^\delta$, which can be easily passed onto $\rho_z^\delta$,  thus ensuring the log-Sobolev inequality  ~\eqref{eqn:lsi}. If there are other conditions on $\rho_y^\delta$ that can directly show the log-Sobolev inequality for $\rho_z^\delta$ in~\eqref{eqn:lsi}, exponential convergence will also be achieved.
    \item Theorem~\ref{thm:convergence} holds for all three scenarios of $\Amat$ (invertible, over and under-determined). Specific attention should be drawn to the case when $\Amat$ is over-determined. In this case, $B(y)$ is not full-rank; thus, we cannot show exponential convergence for $\rho_y$. However, according to our theorem, exponential convergence rate can nevertheless be established, with the limiting distribution $\rho_y^\delta$ replaced by its conditional distribution $\rho_{y_\Amat}^\delta$, i.e., $\rho_y^\delta$ restricted to the column space of $\Amat$.
\end{itemize}
\end{remark}

Finally, we present the following Corollary~\ref{cor:convergence} for a general map $\calG$ that has a full-rank Jacobian. This is a particular case of Theorem~\ref{thm:convergence} since $B(y)$ is fully determined under assumption. The proof is omitted here due to similarity to the prior result. 
\begin{corollary}\label{cor:convergence}
Let $D$ in~\eqref{eqn:inf_SIP} be the KL-divergence. Assume $\calG$ satisfies $B(y) \succeq \sigma_{\min}^2 \Imat$ for any $y \in \calR$ where $\sigma_{\min} > 0$; see Equation~\eqref{eq:full_y_GF}.  If the reference data distribution $\rho_y^\delta$ is $\lambda$-log-concave, i.e., $- \nabla^2 \log\rho_y^\delta\succeq \lambda \,  \Imat$ with $\lambda > 0$ and $\KL(\rho_y(0) || \rho_{y}^\delta) < \infty$ where $\rho_y(0) = \calG \# \rho_u(0)$, then $\rho_y = \calG \# \rho_u$ converges to $\rho_y^\delta$ exponentially fast in terms of the $\KL$ divergence:
\begin{equation}\label{eq:expo_contract_2}
     \KL(\rho_y(t) || \rho_{y}^\delta)  \leq \exp\left(-2 \sigma_{\min}^2 \lambda\, t \right)  \KL(\rho_y(0) || \rho_{y}^\delta)\,.
\end{equation} 
\end{corollary}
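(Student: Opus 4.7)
The plan is to mirror the proof of~\Cref{thm:convergence}, but with two notable simplifications afforded by the present assumptions. First, because $\calG$ has a full-rank Jacobian and $B(y)\succeq \sigma_{\min}^2 \Imat$ uniformly on $\calR$, the diffusion matrix in the evolution equation~\eqref{eq:full_y_GF} is already (uniformly) positive definite. This eliminates the need to project onto a column space and to replace $\rho_y^\delta$ by its conditional distribution, as was necessary in the linear case. Second, the $\lambda$-log-concavity hypothesis on $\rho_y^\delta$ directly supplies a logarithmic Sobolev inequality via Bakry--\'Emery, without any auxiliary change of variables.

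Concretely, I would proceed as follows. Starting from~\eqref{eq:full_y_GF}, I would differentiate the relative entropy and integrate by parts to obtain
\begin{equation*}
\frac{\rd}{\rd t}\KL(\rho_y(t)\,\|\,\rho_y^\delta) \;=\; -\int \Big\langle \nabla_y\log\tfrac{\rho_y}{\rho_y^\delta},\; B(y)\,\nabla_y\log\tfrac{\rho_y}{\rho_y^\delta}\Big\rangle\,\rho_y\,\rd y.
\end{equation*}
Using the uniform bound $B(y)\succeq \sigma_{\min}^2\Imat$ on $\calR$, the right-hand side is bounded above by $-\sigma_{\min}^2\, I(\rho_y\,\|\,\rho_y^\delta)$, where $I$ denotes the usual (unweighted) relative Fisher information of $\rho_y$ with respect to $\rho_y^\delta$.

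Next I would invoke the Bakry--\'Emery criterion~\cite{Bakry1985}: since $-\nabla^2\log\rho_y^\delta\succeq \lambda\,\Imat$, the measure $\rho_y^\delta$ satisfies a logarithmic Sobolev inequality with constant $\lambda$, namely
\begin{equation*}
\KL(\rho_y\,\|\,\rho_y^\delta)\;\leq\;\frac{1}{2\lambda}\,I(\rho_y\,\|\,\rho_y^\delta).
\end{equation*}
Chaining this into the entropy dissipation estimate yields $\tfrac{\rd}{\rd t}\KL(\rho_y(t)\,\|\,\rho_y^\delta)\leq -2\sigma_{\min}^2\lambda\,\KL(\rho_y(t)\,\|\,\rho_y^\delta)$, and a single application of Gr\"onwall's inequality produces the desired exponential contraction~\eqref{eq:expo_contract_2}.

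The main subtlety, and arguably the only non-routine point, is a consistency issue on supports: the evolving density $\rho_y(t)$ lives on $\calR=\calG(\calD)$, whereas the log-Sobolev inequality for $\rho_y^\delta$ a priori applies to test densities on $\mathbb{R}^n$. However, under the hypothesis $\KL(\rho_y(0)\,\|\,\rho_y^\delta)<\infty$ we have $\rho_y(0)\ll\rho_y^\delta$, and both the Fisher information and the entropy are computed as integrals against $\rho_y$, so the log-Sobolev inequality can be applied to $\rho_y$ viewed as a density on $\mathbb{R}^n$ (trivially extended by zero). Aside from this point, which parallels the analogous consideration in the proof of~\Cref{thm:convergence}, the argument is a direct specialization, so I would omit the remaining routine calculations.
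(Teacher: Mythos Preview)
Your proposal is correct and follows exactly the approach the paper intends: the paper explicitly omits the proof of~\Cref{cor:convergence}, remarking that it is a simplification of the argument for~\Cref{thm:convergence} because $B(y)$ is now fully determined (i.e., uniformly positive definite). Your identification of the two simplifications---no projection onto a column space and no auxiliary change of variables for the log-Sobolev step---is precisely what distinguishes this corollary from the linear case, and your entropy dissipation/Bakry--\'Emery/Gr\"onwall chain matches the template of~\eqref{eqn:rho_y_exp_1}--\eqref{eqn:rho_y_exp_2}.
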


\bibliographystyle{siam}
\bibliography{ref.bib}

\end{document}